\documentclass[conference]{IEEEtran}
\IEEEoverridecommandlockouts

\usepackage{cite}
\usepackage{amsmath,amssymb,amsfonts}
\usepackage{algorithmic}
\usepackage{graphicx}
\usepackage{textcomp}
\usepackage{xcolor}
\def\BibTeX{{\rm B\kern-.05em{\sc i\kern-.025em b}\kern-.08em
    T\kern-.1667em\lower.7ex\hbox{E}\kern-.125emX}}

\newcount\Comments  
\usepackage{color}
\definecolor{darkgreen}{rgb}{0,0.5,0}
\definecolor{purple}{rgb}{1,0,1}
\newcommand{\kibitz}[2]{\ifnum\Comments=1\textcolor{#1}{#2}\fi}

\newcommand{\robab}[1]{\kibitz{blue}{[Robab: #1]}}

\newcommand{\del}[1]{\kibitz{red}{[Deleted: #1]}}

\usepackage{comment}

\usepackage{amsthm}
\theoremstyle{definition}
\newtheorem{definition}{Definition}

\theoremstyle{definition}

\theoremstyle{plain}
\newtheorem{theorem}{Theorem}

\theoremstyle{remark}
\newtheorem{remark}{Remark}

\usepackage{amsmath}
\usepackage{amsthm}
\usepackage{bbm}

\usepackage[utf8]{inputenc}
\usepackage{times}
\usepackage{soul}
\usepackage{url}
\usepackage[hidelinks]{hyperref}
\usepackage[utf8]{inputenc}
\usepackage{caption}
\usepackage{booktabs}
\usepackage{algorithm}

\usepackage{nomencl}
\makenomenclature

\usepackage{hyperref,color}    
\usepackage[hypcap]{caption}  

\usepackage{subcaption}

\usepackage{xcolor}
\usepackage{tikz}
\usepackage{fancyvrb}
\usepackage{alltt}
\usepackage{multirow}

\usepackage{makecell}

\makeatletter
\newcommand{\linebreakand}{%
  \end{@IEEEauthorhalign}
  \hfill\mbox{}\par
  \mbox{}\hfill\begin{@IEEEauthorhalign}
}
\makeatother

\begin{document}

\title{Modeling Memory-Dependent Reliability of LLMs: A Hidden Markov Model\\
}

\author{
\IEEEauthorblockN{
Robab~Aghazadeh~Chakherlou\IEEEauthorrefmark{1},
Siddartha~Khastgir\IEEEauthorrefmark{1},
Peter~Popov\IEEEauthorrefmark{2}, and
Xingyu~Zhao\IEEEauthorrefmark{1} \thanks{Corresponding author: X.~Zhao at \texttt{xingyu.zhao@warwick.ac.uk}}
}

\IEEEauthorblockA{
\IEEEauthorrefmark{1}\textit{WMG, University of Warwick},
Coventry, United Kingdom
\\
\{robab.aghazadeh-chakherlou,s.khastgir.1,xingyu.zhao\}@warwick.ac.uk
}

\IEEEauthorblockA{
\IEEEauthorrefmark{2}\textit{Centre for Software Reliability, City St George's, University of London},
London, United Kingdom
\\
p.t.popov@citystgeorges.ac.uk
}

\IEEEauthorblockA{
}
}

\maketitle

\begin{abstract}
Reliability assessment of large language models (LLMs) seeks to estimate the probability that a model produces correct responses under a specified operational profile. Conventional benchmark-based evaluation, often summarized by aggregate accuracy, provides a point estimate of performance but does not characterize the uncertainty associated with reliability claims. Currently, statistical inference methods for LLM reliability assessment are emerging. However, a key assumption underlying these models is that test outcomes can be treated as independent repeated trials. This assumption may be inappropriate in sequential settings, where later responses depend on earlier interactions through retained context, error propagation, or an evolving interaction state. 

We extend a hierarchical Bayesian framework for LLM reliability assessment by relaxing the assumption of independent task outcomes and introducing a Hidden Markov Model to capture sequential dependence in benchmark-constructed interaction sessions. In this formulation, outcomes are generated from a latent interaction state evolving according to a first-order Markov process, capturing changes in interaction context.  Through experiments using Anthropic Claude and OpenAI on four datasets, we demonstrate the potential impact of sequential dependence on reliability assessment. The results suggest that ignoring sequential dependence may lead to overconfident reliability estimates.

\end{abstract}

\begin{IEEEkeywords}
LLM reliability, temporal dependence, Hidden Markov Model.
\end{IEEEkeywords}

\IEEEpeerreviewmaketitle

\section{Introduction}
\label{sec_introduction}

Large language models (LLMs) are increasingly deployed in
interactive settings where users engage with the model through
sequences of tasks\footnote{A task is an evaluated interaction with the LLM — consisting of one or more related prompts — that produces a binary outcome (success/failure). In the benchmark settings used here, each task comprises a single prompt.} within a continuous interaction, rather
than as isolated evaluations in which each task is performed
independently. In such settings, the operational profile\footnote{A probability distribution over tasks (or subdomains) representing how tasks are encountered in practice.} (OP) provides a description of real-world usage by specifying how frequently different tasks are encountered in practice. In sequential interaction settings, however, performance may also depend on the order in which tasks are encountered, as outcomes can be influenced by the evolving interaction context. This raises the question of whether task outcomes can be treated as independent, as assumed in recent studies.

Early software reliability research shows that failures are not uniformly distributed over the input space, but occur under particular input conditions~\cite{bishop_pods_1988}.
Building on this, failure-inducing inputs may form contiguous regions in the input space (“failure regions” or “blobs”), leading to clustered failures along certain trajectories~\cite{bishop_variation_1993}. The statistical implications of such dependence show that assuming independence can lead to misleading reliability claims when failures cluster over time or along operational trajectories~\cite{strigini_testing_1996}.
This suggests a broader principle: failures tend to cluster when underlying conditions evolve. While the input space of LLMs differs from that of classical
software systems, an analogous effect may arise in sequential\footnote{Throughout this paper, we use sequential to refer to the ordered structure of task execution and temporal to refer to the statistical dependence between outcomes arising from that sequential structure.}
interaction settings, where a sequence of tasks is performed
within a continuous session and context is retained across
tasks. In particular, accumulated context, error propagation,
or reasoning drift may create interaction conditions under
which failures become more likely, leading to temporal
clustering of failures along an interaction trajectory.
As a result, task outcomes may exhibit temporal dependence,
challenging the common assumption of independence. Ignoring temporal dependence may lead to overconfident reliability estimates by failing to capture failure clustering and thus underestimating uncertainty.  

This challenge is not unique to sequential interaction: even in standard benchmark evaluation, Miller~\cite{miller_adding_2024} shows that grouped questions lead to clustered observations and underestimated uncertainty when independence is assumed. However, the dependence considered there is structural within datasets rather than temporal across interactions — the form of dependence this work addresses.

The independence assumption is reasonable when tasks are executed in isolation, such as in offline benchmarking where each task is evaluated in a fresh session, so outcomes can be modeled as independent Bernoulli trials. However, many real-world uses of LLMs involve sequential interactions rather than isolated tasks. Users may engage in multi-step reasoning, long coding sessions, or extended dialogues where context accumulates. In such settings, errors may propagate, reasoning may degrade, and interaction dynamics can affect subsequent performance.
Consequently, the independence assumption may not adequately represent reliability under realistic deployment. One alternative is to define reliability at the level
of an entire interaction session, for example as the probability
that a session is successful. However, this does not capture how
performance evolves during interaction or how failures arise. Instead, we model dependence directly by extending the hierarchical Bayesian framework in~\cite{luettgau_hibayes_2025, aghazadeh_hierarchical_2025}, which represents reliability across the LLM–domain–subdomain hierarchy (Fig.~\ref{fig_hierarchical structure}).

Several modeling approaches can represent temporal dependence 
in sequential task outcomes. Autoregressive logistic models 
condition success probabilities on previous observed outcomes, 
capturing dependence through the history of failures and 
successes directly. Bayesian change-point 
models identify structural breaks in performance, assuming 
constant behavior within segments and abrupt transitions between them.
A Markov model offers a different perspective: rather than conditioning on observed outcomes or assuming abrupt shifts, it posits that the system occupies a state at each time step, and that this state evolves gradually according to transition probabilities. This is appropriate for the LLM 
interaction setting, where performance may vary due to 
unobserved factors that persist and evolve over successive 
tasks rather than changing abruptly or being fully explained 
by past outcomes alone. Under a first-order Markov 
assumption,\footnote{A first-order model is adopted here for 
parsimony; higher-order dependencies can in principle be 
captured by expanding the state space, at the cost of 
increased computational complexity and parameter estimation 
demands.} the effect of the entire past history on future 
performance is captured by the current state,\footnote{In a 
Markov model, the state represents the current condition of 
the system and summarizes past information relevant for 
predicting future outcomes.} so that the next outcome depends 
only on the present state rather than the full history of 
previous tasks. Table~\ref{tab_model_comparison} summarizes these 
alternatives. 

In the LMM interaction setting, however, the interaction condition is not 
directly observable from task outcomes. We therefore adopt a 
Hidden Markov Model (HMM), which extends the Markov framework 
by treating this condition as a latent variable inferred from 
the observed outcome sequence. In an HMM, a latent state 
evolves according to a Markov process and the observed 
performance outcomes are generated conditionally on this 
state, separating the dynamics of the unobserved interaction 
condition from the observable performance record. This 
interpretation is supported by work showing that LLMs can 
implicitly perform HMM-like inference~\cite{dai_pre_2025} 
and by HMM-based modeling of interaction 
dynamics~\cite{lim_evaluating_2025}.

\begin{table*}[h!]
\centering
\caption{Alternative approaches for modeling temporal dependence.}
\label{tab_model_comparison}
\begin{tabular}{p{3.9cm}p{4.4cm}p{8.5cm}}
\hline
\textbf{Model} &
\textbf{Dependence representation} &
\textbf{Primary focus} \\
\hline
Autoregressive logistic model~\cite{kaufmann_regression_1987} &
Dependence on lagged observed outcomes &
Models temporal dependence through previous successes and failures, but does not represent unobserved interaction conditions. \\
Observed-outcome Markov model &
Markov dependence on previous observed outcomes &
Captures short-term dependence but does not distinguish latent interaction regimes from observed successes and failures. \\
Bayesian change-point model~\cite{barry_autoreg_1993} &
Piecewise-constant latent regimes separated by change points &
Designed to detect abrupt structural changes in performance rather than model continuously evolving interaction states. \\
Hidden Markov model &
Latent-state temporal dynamics &
Represents dependence through unobserved interaction states that evolve over time and influence future task outcomes. \\
\hline
\end{tabular}
\end{table*}

\del{Temporal dependence is commonly modeled using state-based stochastic processes (e.g., Markov models) which capture the effect of past observations through a latent state.
Recent studies suggest that LLM generation can be viewed as a sequential stochastic process with Markov-like structure \cite{zekri_large_2024, ildiz_self_2024}, supporting this perspective.}

\del{Under a first-order\footnote{A first-order model is adopted here for parsimony; higher-order dependencies can in principle be captured by expanding the state space, at the cost of increased computational complexity and parameter estimation demands.} Markov assumption, the effect of the entire past
history on future performance is captured by the current state\footnote{In
a Markov model, the state represents the current condition of the system and summarizes past information relevant for predicting future outcomes.}. Formally, the next outcome depends only on the present
state rather than the full history of previous tasks. 
Thus, the probability of success on the next task depends
on the current interaction condition, and this condition evolves over time
in a stochastic but structured manner.
}

\del{In the context of LLM sequential use, this interaction state is not directly observable. Factors such as internal reasoning mode, context accumulation, or interaction fatigue may influence performance, but they cannot be directly measured. Instead, we only observe task outcomes (e.g., success/failure).}

\del{To account for this unobserved interaction condition, we adopt a Hidden Markov Model (HMM). In an HMM, a latent ``state'' evolves according to a Markov process, and the observed performance outcomes are generated conditionally on this latent state. 
The latent state represents the current interaction condition — summarizing all unobserved factors from past interactions, such as dialogue coherence, accumulated context distortions from earlier errors, or fluctuations in reasoning stability, that may affect future performance. This interpretation is supported by work showing that LLMs can implicitly perform HMM-like inference~\cite{dai_pre_2025} and by HMM-based modeling of interaction dynamics~\cite{lim_evaluating_2025}.}

\del{Given the hierarchical structure of the LLM (Fig.~\ref{fig_hierarchical structure}), interaction dynamics can be modeled in several ways. A global-state formulation uses a single latent state across domains, while the domain-specific formulation adopted here uses separate states per domain, capturing within-domain dependence only (even within a single session\footnote{Although tasks may occur within a single interaction session, the sequential dependence is modeled separately within each domain, with independent latent state processes. This simplifies the model and preserves interpretability.}). 
Hybrid formulations capturing both global and local dynamics are theoretically possible but require more data and more complex inference.
Having specified the sequential dynamics within each domain, we next account for how tasks are distributed across domains and subdomains in practice through the OP.}

Given the hierarchical structure of the LLM (Fig.~\ref{fig_hierarchical structure}), interaction dynamics can be modeled in several ways. A global-state formulation uses a single latent state across domains, whereas the domain-specific formulation adopted here uses separate states per domain, capturing only within-domain dependence (even within a single session\footnote{Although tasks may occur within a single interaction session, the sequential dependence is modeled separately within each domain, with independent latent state processes. This simplifies the model and preserves interpretability.}). Hybrid formulations combining global and local dynamics are possible but require more data and more complex inference. 

Following established principles from software reliability, we incorporate the OP as a structured description of real-world usage. Within our hierarchical setting (Fig.~\ref{fig_hierarchical structure}), following~\cite{aghazadeh_hierarchical_2025}, the OP specifies how frequently different domains and subdomains are encountered during deployment by assigning a weight to each domain and subdomain. These operational weights allow reliability to be aggregated from subdomains to domains and ultimately to the LLM level in a usage-aware manner. Thus, reliability is not treated as a simple average over benchmark datasets, but as a deployment-relevant measure that reflects the mix of tasks the model is expected to face.
In this work, the OP is treated as a static distribution over task types: it captures relative frequencies, but not the temporal structure of task arrivals. This provides a tractable baseline for incorporating usage information into the reliability analysis, while the implications of this assumption are discussed later.

\del{The main contributions are:
(i) identifying sequential dependence as a key challenge in LLM reliability assessment, and showing that ignoring it leads to overconfident reliability estimates;
(ii) introducing a hierarchical Hidden Markov Model that extends the state-of-the-art hierarchical Bayesian framework to capture temporal dependence between task outcomes within a continuous interaction session; and
(iii) deriving the posterior distributions of reliability measures under the proposed hierarchical HMM.}
The main contributions are: (i) identifying sequential dependence as a challenge in LLM reliability assessment and demonstrating its impact on reliability estimates; (ii) extending a hierarchical Bayesian reliability framework with a Hidden Markov Model to capture temporal dependence in continuous interaction sessions; and (iii) deriving the corresponding posterior reliability distributions.
\robab{It could be be better to say:  identifying sequential dependence as a challenge in LLM reliability assessment, relaxing the standard independence assumption to better reflect realistic interaction settings, and demonstrating its impact on reliability estimates. I may use this for camera ready version.}

The remainder of this paper is organized as follows: Section~\ref{sec_related_work} reviews related work; Section~\ref{sec_reliability_HMM} introduces the HMM; Section~\ref{sec_numerical_example} presents a numerical example; and Section~\ref{sec_discussion_conclusion} concludes.

\section{Related work}
\label{sec_related_work}

Violating the assumptions underlying reliability assessment can lead to
overly optimistic reliability claims~\cite{littlewood_conservative_1997}.
One of the most important assumptions is independence between test
outcomes, which can be difficult to justify in practice, particularly
in sequential settings where earlier executions may influence later ones.
In software reliability, this challenge has been approached in two ways:
i) by acknowledging the uncertainty about independence without
explicitly modeling it; 2) by directly modeling the dependence structure. The first approach avoids committing to independence by incorporating
uncertainty about this assumption directly into the analysis. For example,
a conservative Bayesian inference derive reliability bounds that remain valid
even when dependence is present~\cite{salako_unnecessity_2023}. The second approach models dependence explicitly using stochastic processes \del{Markov-based frameworks have been widely used in this context:}(e.g., Markov-based frameworks): a binary Markov model introduced in~\cite{chen_binary_1996} shows that dependence can significantly affect reliability estimates, and a Markov renewal framework developed in~\cite{goseva_popstojanova_failure_2000} demonstrates that correlated executions lead to failure clustering. When the state driving dependence is not directly observable, Hidden Markov Models (HMMs) provide a natural extension by introducing a latent state, and have been applied in areas such as degradation~\cite{gamiz_hidden_2023} and resilience assessment~\cite{liu_resilience_2025}.
\del{degradation modeling~\cite{gamiz_hidden_2023} and system resilience assessment~\cite{liu_resilience_2025}.}

An analogous challenge arises in LLM reliability assessment. In sequential interaction settings, the outcome of a task may be influenced by previous interactions, leading to dependence across executions — mirroring the failure clustering observed in software reliability. Sensitivity analysis further shows that ignoring dependence between outcomes can materially affect reliability estimates~\cite{aghazadeh_hierarchical_2025}.
This motivates explicit modeling of sequential dependence using
state-based frameworks. Supporting this, the token generation
mechanism of LLMs has been formally characterized as a
finite-state Markov chain~\cite{zekri_large_2024}, and
transformer dynamics have been related to Markov-type
stochastic processes~\cite{ildiz_self_2024}, suggesting that
LLM outputs have inherent sequential structure. HMMs have also
been applied to sequential language analysis, demonstrating
the value of modeling sequential dynamics in language
settings~\cite{lim_evaluating_2025}. Together, these results support modeling sequential task outcomes as dependent and motivate the use of HMMs for LLM reliability assessment. \del{Together, these results
support the view that task outcomes in sequential interactions
may not be independent, and that HMMs provide a principled
framework for capturing this dependence in LLM reliability
assessment.}

Uncertainty quantification is central to reliability assessment.
In LLMs, hierarchical Bayesian models quantify uncertainty in
reliability across domains~\cite{luettgau_hibayes_2025, aghazadeh_hierarchical_2025},
while uncertainty at the level of responses has also
been studied~\cite{geng_survey_2024, liu_uncertainty_2025}.
However, no existing work combines temporal dependence modeling
with Bayesian inference. We address this gap by
extending the hierarchical framework with an HMM.

\section{LLM Reliability under Temporal Dependence}
\label{sec_reliability_HMM}

\subsection{Domain-Specific Hidden Markov Model}
\label{sec_markov_model}
\del{We consider the reliability assessment of an LLM under three levels of hierarchy: 
the overall model (LLM), domains, and subdomains. Let $D_i$ ($D^{(i)}$ in Fig.~\ref{fig_hierarchical structure}) denote the $i$-th domain (e.g., reasoning, coding), and $S_{ij}$ the $j$-th subdomain within $D_i$ (e.g., specific datasets or task categories).}
We assess LLM reliability across three hierarchical levels: the overall model (LLM), domains, and subdomains. Let $D_i$ ($D^{(i)}$ in Fig.~\ref{fig_hierarchical structure}) denote the $i$-th domain (e.g., reasoning or coding), and $S_{ij}$ the $j$-th subdomain within $D_i$ (e.g., a dataset or task category).
\begin{figure}[!htbp]
    \centering
    \includegraphics[width=0.99\linewidth]{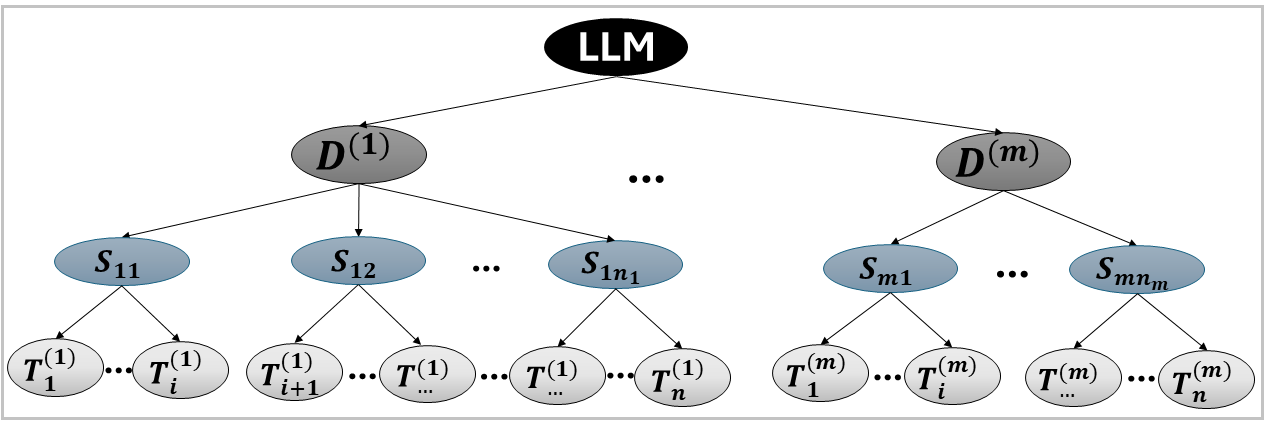}
    \caption{Hierarchical structure of the LLM evaluation. The model is divided into domains and subdomains, with tasks occurring sequentially within each domain.}
    \label{fig_hierarchical structure}
\end{figure}
Each subdomain is associated with a probability of success ($\theta_{ij}$) 
capturing the combined effect of task difficulty and model reliability 
within that subdomain. Rather than treating these probabilities as 
independent, they are linked through shared domain-level hyperparameters 
that govern the mean reliability across subdomains and the degree to which 
individual subdomains may deviate from it (Fig.~\ref{fig_partial_pooling}).
This induces partial pooling, whereby each subdomain’s reliability is informed not only by its own data but also by information from other subdomains within the same domain. \del{This is implemented via a hierarchical Bayesian formulation.}
\begin{figure}[!htbp]
    \centering
    \includegraphics[width=0.55\linewidth]{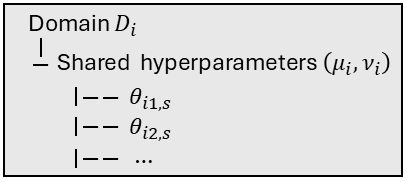}
    \caption{Hierarchical partial pooling in domain \(D_i\): subdomains share hyperparameters ($\mu_i, \nu_i$), so each is informed by its own data and others in the domain.}
    \label{fig_partial_pooling}
\end{figure}

While the hierarchical structure captures cross-subdomain dependence 
through partial pooling, it assumes that task outcomes are independent 
over time. To capture temporal dependence between successive tasks, we 
extend this framework using a HMM.\del{, which models the evolution of an 
unobserved interaction state over time.} In this formulation, a latent 
interaction state evolves according to a Markov process and influences 
the success probabilities of all subdomains within the domain, allowing 
the model to capture how reliability changes during sequential 
interactions. As a result, two distinct forms of dependence are 
incorporated within the same hierarchical structure: cross-subdomain 
dependence induced by partial pooling, and temporal dependence induced 
by the latent state dynamics of the HMM. Here, we focus on temporal dependence, as partial pooling across subdomains is well studied~\cite{luettgau_hibayes_2025, aghazadeh_hierarchical_2025}.

Given the schematic representation of the HMM for domain $D_i$ in Fig.~\ref{fig_hmm_structure}, we now describe the model formally. Parameters shown in the figure (e.g.,  $\theta_{ij,s}$, $A_i$, etc.) are defined as the HMM is developed throughout this section.

\begin{figure}[!htbp]
    \centering
    \includegraphics[width=0.99\linewidth]{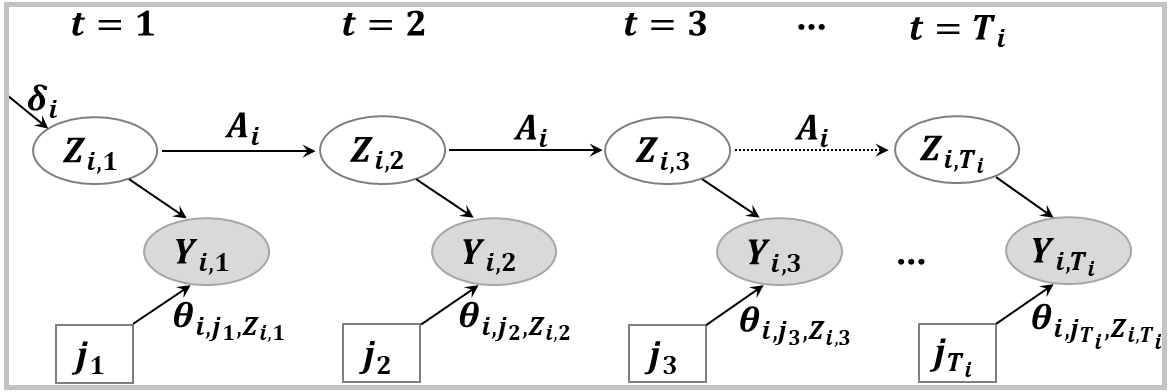}
    \caption{\small{Hidden Markov Model for domain $D_i$. The interaction starts in a latent state $Z_{i,1} \sim \delta_i$ (uniform). At each time $t$, the subdomain label $j_t$ is observed and the outcome $Y_{i,t}$ is generated with success probability $\theta_{i,j_t,Z_{i,t}}$ ($\theta_{ij,s}$ with $j=j_t,\; s=Z_{i,t}$). The state then evolves to $Z_{i,t+1}$ via transition matrix $A_i$. This process repeats, so outcomes depend on both the latent state and task type ($\text{initial state} \rightarrow \text{observe } j_t \rightarrow \text{generate } Y_{i,t} \rightarrow \text{transition} \rightarrow \text{repeat}$). Circles denote random variables ($Z_{i,t}$, $Y_{i,t}$), shaded nodes observed outcomes, and rectangles observed subdomain labels $j_t$.}}
    \label{fig_hmm_structure}
\end{figure}

\paragraph{Session structure and observed variables}

We consider a single continuous interaction session\footnote{A session is a sequence of tasks in a continuous interaction within domain $D_i$, where context is retained, tasks may span subdomains, and a shared latent state evolves over time.} within domain $D_i$, 
during which tasks are executed sequentially without resetting the model's 
context. 
Let $t \in \{1, \dots, T_i\}$ denote the chronological position of a task 
within that interaction session, where $T_i$ is the total number of tasks 
performed sequentially in domain $D_i$. \del{At each task position $t$, we 
observe a subdomain label $j_t \in \{1, \dots, n_i\}$, where $n_i$ denotes the number of subdomains in domain $D_i$, and a binary outcome 
$Y_{i,t} \in \{0,1\}$, where $Y_{i,t} = 1$ denotes success and $Y_{i,t} 
= 0$ denotes failure,} At each task position $t$, we observe a subdomain label $j_t \in \{1,\dots,n_i\}$, where $n_i$ denotes the number of subdomains in domain $D_i$, and a binary outcome $Y_{i,t}\in\{0,1\}$, where $Y_{i,t}=1$ denotes success and $Y_{i,t}=0$ denotes failure. The subdomain label may vary across tasks, allowing successive tasks within a session to belong to different subdomains.
\del{Successive tasks within a session may belong to different subdomains: the subdomain label $j_t$ can vary freely across time steps $t = 1, \ldots, T_i$.} 
\del{The OP is not used to model
the observed within-session ordering, but later to weight task
types in the reliability aggregation.}
The ordering of tasks $(j_t, Y_{i,t})$ across different subdomains within a domain reflects the operational sequence in which they are encountered. The effect of task ordering is discussed in Remark~\ref{remark_order_of_tasks} and analyzed in Sec.~\ref{sec_numerical_example}.

\paragraph{Latent interaction state dynamics}

While the observed sequence $(j_t, Y_{i,t})$ records task types and outcomes, it does not capture the underlying interaction condition that may evolve during the session. In doing this, we introduce a latent interaction state
process $\{Z_{i,t}\}_{t=1}^{T_i}$, where $Z_{i,t} \in \{1, \dots, K\}$
denotes the latent state at time $t$. We use $s$ to represent a particular state in this set ($Z_{i,t}=s$), and $K$
is the total number of latent interaction states. The
value of $K$ is selected in a data-driven manner using
predictive model comparison based on the out-of-sample log
predictive density, a strictly proper scoring rule
that rewards models for assigning higher probability to the
observed outcomes and therefore provides a principled
criterion for model selection~\cite{gneiting_strictly_2007}
(Remark~\ref{remark_number_of_states}). The latent states are not directly observable and do not have a predefined physical interpretation. They provide an abstract representation of the interaction condition governing task performance. Any interpretation must therefore be inferred indirectly from their estimated emission and transition characteristics (Remark~\ref{remark_interpreting_states}).

The interaction state is defined at the \emph{domain level} and affects all 
subdomains within $D_i$. It represents a sufficient summary of the recent interaction
history within $D_i$ and evolves according to a
first-order Markov chain.

Because the latent state is unobserved, the state at the start of a session is uncertain and is therefore represented by a probability distribution over the $K$ possible states. In Fig.~\ref{fig_hmm_structure} we denote by $\delta_i = (\delta_{i,1}, \dots, \delta_{i,K})$ the initial state distribution over the $K$ latent states, where $\delta_{i,s} = p(Z_{i,1} = s)$ and $\sum_{s=1}^{K} \delta_{i,s} = 1$. The latent state then evolves as
\[
Z_{i,1} \sim \delta_i, \qquad Z_{i,t+1} \mid Z_{i,t} \sim A_i,
\]
where $A_i$ is a row-stochastic transition matrix whose entries $(A_i)_{s,s'}$ represent the probability of transitioning from state $s$ to state $s'$ (Fig.~\ref{fig_hmm_structure}), where $s$ and $s'$ denote the realized values of $Z_{i,t}$ and $Z_{i,t+1}$, respectively.
\[
(A_i)_{s,s'} = p(Z_{i,t+1}=s' \mid Z_{i,t}=s).
\]
\del{We explicitly assume a first-order Markov property for the latent state sequence, which states that
$$p(Z_{i,t+1} \mid Z_{i,t}, Z_{i,t-1}, \ldots) = p(Z_{i,t+1} \mid Z_{i,t}).$$}

We assume that the transition matrix $A_i$ is constant
over time. This assumption is discussed in more detail in
Remark~\ref{remark_constant_matrix}.

The initial distribution is fixed as uniform over latent states, 
and for the transition matrix $A_i$, we place Dirichlet priors on 
its rows (see Remark~\ref{remark_delta_Ai}). Given the observed 
sequence $\{(j_t, Y_{i,t})\}_{t=1}^{T_i}$, the joint posterior 
distribution is
\[
p\!\left( A_i, Z_{i,1:T_i} \mid Y_{i,1:T_i}, j_{1:T_i} \right).
\]
which is used to infer the latent-state dynamics and to compute posterior reliability distributions for future tasks.

\paragraph{State-dependent emission model}

\del{For each domain $D_i$, subdomain $j$, and latent interaction
state $s \in \{1,\dots,K\}$, we introduce a parameter
$\theta_{ij,s} \in (0,1)$ representing the probability of task
success when the interaction is in state $s$ and the task
belongs to subdomain $j$. Conditional on the latent state and
the subdomain label, task outcomes follow
\[
Y_{i,t} \mid (Z_{i,t}=s,\, j_t=j) \sim \mathrm{Bernoulli}(\theta_{ij,s}).
\]}
For each domain $D_i$, subdomain $j$, and latent state
$s\in\{1,\dots,K\}$, let $\theta_{ij,s}\in(0,1)$ denote the
probability of task success when the interaction is in state
$s$ and the task belongs to subdomain $j$. Conditional on the
latent state and subdomain label,
\[
Y_{i,t}\mid(Z_{i,t}=s,\,j_t=j)
\sim \mathrm{Bernoulli}(\theta_{ij,s}).
\]

\paragraph{Hierarchical prior structure}
\del{To estimate the success probabilities $\theta_{ij,s}$ across subdomains and interaction states while preserving the partial-pooling structure (Figs.~\ref{fig_hierarchical structure},~\ref{fig_partial_pooling}), we place a common domain-level prior on $\theta_{ij,s}$. That is, all $\theta_{ij,s}$ corresponding to different subdomains $j$ and interaction states $s$ within the same domain $D_i$ share the same hyperparameters $(\mu_i, \nu_i)$, allowing information to be pooled across both subdomains and interaction states while still allowing each $(j,s)$ pair to have its own reliability level.
Specifically, for each $D_i$, we assume: $\mu_i \sim \mathrm{Beta}(a_i, b_i)$, $\nu_i \sim \mathrm{Gamma}(c_i, \mathrm{rate}=d_i)$, and for all subdomains $j$ and states
\[
\theta_{ij,s} \mid \mu_i, \nu_i
\sim
\mathrm{Beta}(\mu_i \nu_i,\,(1-\mu_i)\nu_i).
\]
}
To estimate $\theta_{ij,s}$ while preserving the partial-pooling
structure (Figs.~\ref{fig_hierarchical structure},~\ref{fig_partial_pooling}),
all subdomain--state pairs within domain $D_i$ share common
domain-level hyperparameters $(\mu_i,\nu_i)$: $\mu_i \sim \mathrm{Beta}(a_i,b_i)$, $\nu_i \sim \mathrm{Gamma}(c_i,\mathrm{rate}=d_i)$, and
\[
\theta_{ij,s}\mid\mu_i,\nu_i
\sim
\mathrm{Beta}(\mu_i\nu_i,(1-\mu_i)\nu_i).
\]
This allows information to be pooled across both subdomains and interaction states while still allowing each $(j,s)$ pair to have its own reliability level.

\del{Here, $\mu_i \in (0,1)$ represents the domain-level mean
reliability, while $\nu_i > 0$ is a concentration parameter
controlling the dispersion of the subdomain--state reliabilities
around $\mu_i$. Large values of $\nu_i$ shrink $\theta_{ij,s}$
more tightly toward $\mu_i$, whereas small values allow
greater variability across subdomains and interaction states.
The role and impact of these hyperparameters have been
extensively discussed in~\cite{aghazadeh_hierarchical_2025}.}
Here, $\mu_i \in (0,1)$ denotes the domain-level mean reliability, while $\nu_i>0$ is a concentration parameter controlling the dispersion of the subdomain--state reliabilities around $\mu_i$. Larger values of $\nu_i$ induce stronger shrinkage of $\theta_{ij,s}$ toward $\mu_i$, whereas smaller values allow greater variability. The role of these hyperparameters is discussed in~\cite{aghazadeh_hierarchical_2025}.


The proposed model consists of three components: (i) a latent-state process describing the interaction condition, (ii) a state-dependent emission model linking the latent state to task outcomes, and (iii) a hierarchical prior structure sharing information across subdomains through partial pooling.

\paragraph{Generative mechanism within a session}
\del{Within a session in domain $D_i$, the generative 
process proceeds as:
\begin{enumerate}
    \item \textbf{Start of session}: The interaction starts in state $s$ 
    with probability $\delta_{i,s}$:
    \[
    Z_{i,1} \sim \delta_i, \quad \text{that is, } 
    p(Z_{i,1}=s) = \delta_{i,s}.
    \]
    \item \textbf{State evolution}: At each task, the interaction state 
    evolves according to the transition probabilities in $A_i$:
    \[
    Z_{i,t+1} \mid Z_{i,t} \sim A_i.
    \]
    \item \textbf{Task outcome}: Given the current state and the task's 
    subdomain, the observed outcome is drawn from a Bernoulli distribution 
    with success probability $\theta_{ij_t,s}$:
    \[
    Y_{i,t} \mid Z_{i,t},\, j_t \sim 
    \mathrm{Bernoulli}(\theta_{i j_t, Z_{i,t}}).
    \]
\end{enumerate}}
Within a session in domain $D_i$, the interaction starts in
a latent state drawn from the initial distribution $\delta_i$.
At each task, the latent state evolves according to the
transition matrix $A_i$. Conditional on the current latent
state and the subdomain label $j_t$, the task outcome
is generated from a Bernoulli distribution with success
probability $\theta_{i j_t, Z_{i,t}}$. This process repeats
throughout the session, inducing temporal dependence between
successive task outcomes through the evolving latent state.

\subsection{Modeling Assumptions and Practical Considerations}
\label{sec_hmm_remarks}

\begin{remark}[Role of task ordering and operational profile]
\label{remark_order_of_tasks}
\del{In sequential reliability assessment, the ordering of tasks may carry
important information about system performance over time. Although
tasks are drawn independently from the OP, which specifies only
their frequency of occurrence, successive task outcomes may be
statistically dependent because the interaction state evolves over
time due to accumulated context, errors, or workload conditions.
Models that assume independent task outcomes~\cite{aghazadeh_hierarchical_2025, luettgau_hibayes_2025}
treat observations as exchangeable and depend only on aggregated
success and failure counts, making them invariant to task ordering.
As a result, they cannot capture temporal clustering of failures
arising from evolving interaction conditions.}
In sequential reliability assessment, task ordering may carry important information about system performance over time. Although tasks are drawn independently from the OP, which specifies only their frequencies, successive outcomes may be statistically dependent because the interaction state evolves through accumulated context, errors, or workload conditions. Models that assume independent outcomes~\cite{aghazadeh_hierarchical_2025, luettgau_hibayes_2025} treat observations as exchangeable and depend only on aggregated success and failure counts, making them invariant to task ordering and unable to capture temporal clustering of failures.

\end{remark}

\begin{remark}[Selection of the Number of Latent States]
\label{remark_number_of_states}

\del{Determining the appropriate number of latent interaction states $K$ 
is a model selection problem. If $K$ is too small, the model may fail 
to capture meaningful sequential structure; if $K$ is too large, it may 
introduce unnecessary complexity and overfit the observed sequence. 
There are different approaches for selecting $K$, including marginal 
likelihood comparison (Bayes factors), cross-validation--based predictive 
assessment, etc.}

\del{We select $K$ using predictive model comparison based on the
\emph{out-of-sample sequential predictive log score}. For each candidate
value $K \in \{1,2,3,\ldots\}$, the HMM is fitted using the same prior
assumptions and evaluated on unseen future tasks. The predictive log
score measures how well the model predicts these outcomes: models that
assign higher probability to the observed future data obtain higher
scores. Because the log score is a strictly proper scoring rule
\cite{gneiting_strictly_2007}, maximizing this quantity favors models
with better predictive performance rather than unnecessary complexity.}

\del{We divide the observed task sequence $(j_{1:T}, y_{1:T})$ into two 
parts: a training segment $(j_{1:T_{\text{train}}}, 
y_{1:T_{\text{train}}})$ and a test segment 
$(j_{T_{\text{train}}+1:T}, y_{T_{\text{train}}+1:T})$ \del{(using 70\% of the tasks for training and the remaining 30\% for testing in the numerical examples in Sec.~\ref{sec_numerical_example})} (a 70/30 train--test split in Sec.~\ref{sec_numerical_example}). The split is performed chronologically, so that the model is trained on past observations and evaluated on future observations~\cite{gneiting_strictly_2007}, and the model is fitted using only the training data, yielding the posterior $p(\psi \mid y_{1:T_{\text{train}}}, j_{1:T_{\text{train}}})$.} 

Determining the number of latent states $K$ is a model
selection problem: if $K$ is too small, meaningful sequential structure
may be missed; if too large, it may 
introduce unnecessary complexity and overfit the observed sequence. Common approaches
include marginal likelihood comparison (Bayes factors) and
cross-validation--based predictive assessment.

We select $K$ using the \emph{out-of-sample sequential predictive log
score}. For each $K \in \{1,2, \ldots\}$, the HMM is fitted
under the same prior assumptions and evaluated on unseen future tasks.
The predictive log score measures how much probability the model assigns
to the observed future outcomes; because it is a strictly proper scoring
rule~\cite{gneiting_strictly_2007}, maximizing it favors predictive
accuracy over unnecessary complexity.

The observed sequence $(j_{1:T},y_{1:T})$ is split chronologically into
training and test segments,
$(j_{1:T_{\text{train}}},y_{1:T_{\text{train}}})$ and
$(j_{T_{\text{train}}+1:T},y_{T_{\text{train}}+1:T})$
(a 70/30 split in Sec.~\ref{sec_numerical_example}). The model is
fitted using only the training data, yielding the posterior
$p(\psi\mid y_{1:T_{\text{train}}},j_{1:T_{\text{train}}})$.
\del{The predictive distribution for the test sequence is then obtained by 
integrating over the posterior uncertainty:}
The predictive distribution for the test sequence is obtained by integrating over the posterior of the model parameters:
\begin{align}
& p(y_{T_{\text{train}}+1:T} \mid y_{1:T_{\text{train}}}, j_{1:T})
= \nonumber \\
&\int 
p(y_{T_{\text{train}}+1:T} \mid y_{1:T_{\text{train}}}, j_{1:T}, 
\psi)
\, 
p(\psi \mid y_{1:T_{\text{train}}}, j_{1:T_{\text{train}}})
\, d\psi \nonumber
\end{align}

where $\psi_i =
\{\theta_{ij,s},\, A_i,\, \mu_i,\, \nu_i\}$. The index $i$ emphasizes that these parameters are domain-specific. 
In practice, this integral is approximated using posterior samples 
$\{\psi_i^{(m)}\}_{m=1}^{N_{\text{samp}}}$, yielding
\[
\log \hat{p}
=
\log \left(
\frac{1}{N_{\text{samp}}}
\sum_{m=1}^{N_{\text{samp}}} 
p(y_{T_{\text{train}}+1:T}
\mid y_{1:T_{\text{train}}}, j_{1:T}, \psi_i^{(m)})
\right),
\]
where $N_{\text{samp}}$ denotes the number of posterior samples, 
$\psi_i^{(m)}$ is the $m$-th posterior draw of the parameter vector, 
and $T_{\text{train}}$ denotes the last time step included in the training data.

\del{If model comparison based on the predictive log score suggests $K=1$, 
this indicates that introducing multiple latent interaction states does 
not improve predictive performance. In that case, the sequential component 
of the HMM is unnecessary, and the model reduces to a hierarchical reliability 
model with independent task outcomes (e.g., \cite{aghazadeh_hierarchical_2025}). 
\del{Whether this occurs in practice depends on the domain and the structure of the observed sequence and is examined in Sec.~\ref{sec_numerical_example}.}}
If predictive model comparison selects $K=1$, introducing additional latent states does not improve predictive performance. In this case, the sequential component is unnecessary, and the model reduces to a hierarchical reliability model with independent task outcomes (e.g., \cite{aghazadeh_hierarchical_2025}).

\end{remark}

\begin{remark}[Interpreting the learned latent states]
\label{remark_interpreting_states}
To improve the interpretation of the learned latent states, we examine two outputs of the HMM.
First, the state-dependent success probabilities $\theta_{ij,s}$ indicate whether the inferred states correspond to different levels of performance. If the states are meaningful, they should be associated with different probabilities of success. 
Second, the posterior transition matrix describes how the latent state evolves across consecutive tasks. In particular, the self-transition probabilities indicate whether the model tends to remain in the same state over time. This helps determine whether the latent states represent persistent interaction regimes or merely short-lived statistical variations.
Together, these outputs answer two questions: (i) do the states differ in performance ($\theta_{ij,s}$), (ii) do they persist over time.  
\end{remark}

\begin{remark}[Time-Homogeneous State Transitions]
\label{remark_constant_matrix}

\del{In our model, the transition matrix $A_i$ is assumed to be
\emph{constant} over time. This means that the mechanism by
which the latent interaction state changes from one task to the
next does not depend on the task index $t$. If the system is
currently in a given state, the probabilities of remaining in
that state or transitioning to another state are the same
regardless of whether the interaction is at the beginning,
middle, or end of the session~\cite{cappe_inference_2005}.
The time-homogeneous assumption is adopted here for parsimony, as it substantially reduces the number of parameters to be estimated and is standard in HMM-based reliability modeling. This choice provides a tractable baseline for capturing sequential dependence while avoiding over-parameterization. 
A non-constant (time-varying)
transition matrix would be required if the interaction dynamics
themselves change during the session, for example due to fatigue
effects, gradual context drift, learning or adaptation over
time, or explicit dependence on the number of past failures. In
such cases, one would model $A_{i,t}$ instead of $A_i$, allowing
transition probabilities to vary with $t$.}

\del{In a fully Bayesian setting, if the transition matrix were 
allowed to vary over time (i.e., $A_{i,t}$ instead of $A_i$), 
one could place priors on each $A_{i,t}$ and update them 
sequentially as observations accumulate, allowing the 
transition mechanism itself to evolve during the session. 
This is not adopted here: $A_i$ is constant, and the Dirichlet prior on its rows (Remark~\ref{remark_delta_Ai}) reflects uncertainty in its fixed value, not temporal variation.}

In our model, the transition matrix $A_i$ is assumed to be
\emph{constant} over time, meaning that transition probabilities
do not depend on the task index $t$. Thus, if the system is in a
given state, the probabilities of remaining in that state or
transitioning to another are the same throughout the session
(beginning, middle, or end)~\cite{cappe_inference_2005}. This
time-homogeneous assumption is adopted for parsimony, reducing
the number of parameters to be estimated, and is standard in
HMM-based reliability modeling. It provides a tractable baseline
for capturing sequential dependence while avoiding
over-parameterization.

A time-varying transition matrix would be required if the
interaction dynamics themselves changed during a session, for
example due to fatigue, context drift, learning or adaptation,
or explicit dependence on the number of past failures. In that
case one would model $A_{i,t}$ instead of $A_i$, allowing
transition probabilities to vary with $t$.
In a fully Bayesian setting, priors could be placed on each
$A_{i,t}$ and updated sequentially as observations accumulate,
allowing the transition mechanism itself to evolve. This is not
adopted here: $A_i$ is fixed over time, and the Dirichlet prior
on its rows (Remark~\ref{remark_delta_Ai}) reflects uncertainty
about its value rather than temporal variation.

\end{remark}

\begin{remark}[Initial State Distribution and Transition Priors]
\label{remark_delta_Ai}
\del{The vector $\delta_i = (\delta_{i,1},\dots,\delta_{i,K})$ denotes the
initial distribution of the latent interaction state, where
$\delta_{i,s}=P(Z_{i,1}=s)$ represents the probability that the session
starts in state $s$. Since the state space contains $K$ possible states,
the distribution consists of $K$ probabilities that sum to one. 
For simplicity, the initial distribution is assumed uniform:
\[
\delta_{i,s}=\frac{1}{K}, \quad s=1,\dots,K,
\]
}

\del{Alternatively, the initial distribution $\delta_i$ may be treated as
unknown and estimated jointly with the transition matrix and emission
parameters within a Bayesian framework \cite{cappe_inference_2005}, which provides additional flexibility. However, as the sequence length increases, the influence of the initial
distribution diminishes due to the mixing behavior\footnote{This reflects the fact that, as the interaction progresses, the latent
state evolves according to the transition dynamics and gradually
becomes less dependent on its initial value.} of the underlying
Markov chain \cite{cappe_inference_2005}. In such cases, fixing
$\delta_i$ to a simple form (e.g., uniform) is often sufficient for
practical inference \cite{rabiner_tutorial_2002}.}

\del{To account for uncertainty in the fixed transition matrix $A_i$,
we place priors on its rows. Since each row is a probability
vector over $\{1,\dots,K\}$, we assign Dirichlet priors:
\[
(A_i)_{s,\cdot} \sim \mathrm{Dirichlet}(\lambda_{i,s,1},\dots,
\lambda_{i,s,K}), \quad s=1,\dots,K.
\]}
\del{Here $\lambda_{i,s,k} > 0$ encodes prior belief about transitions
from state $s$ to state $k$, with equal values yielding a
symmetric uninformative prior. The rows are assigned independent
priors, so the joint prior factorizes as
$p(A_i) = \prod_{s=1}^{K} p\bigl((A_i)_{s,\cdot}\bigr)$,
ensuring each row is updated separately in closed form via
Dirichlet--multinomial conjugacy.}
The vector $\delta_i=(\delta_{i,1},\dots,\delta_{i,K})$ denotes the
initial distribution of the latent interaction state, where
$\delta_{i,s}=P(Z_{i,1}=s)$ is the probability that the session starts
in state $s$. Since there are $K$ possible states, the probabilities
sum to one. For simplicity, we assume a uniform initial distribution: $\delta_{i,s}=\frac{1}{K}$,  $s=1,\dots,K.$

Alternatively, $\delta_i$ could be treated as unknown and estimated
jointly with the transition matrix and emission parameters within a
Bayesian framework~\cite{cappe_inference_2005}. However, as the
sequence length increases, the influence of $\delta_i$ diminishes due
to the mixing behavior\footnote{As the interaction progresses, the
latent state evolves according to the transition dynamics and becomes
less dependent on its initial value.} of the underlying Markov
chain~\cite{cappe_inference_2005}. Consequently, fixing $\delta_i$ to a
simple form (e.g., uniform) is often sufficient in
practice~\cite{rabiner_tutorial_2002}.

To account for uncertainty in the fixed transition matrix $A_i$, we
place Dirichlet priors on its rows:
\[
(A_i)_{s,\cdot}\sim
\mathrm{Dirichlet}(\lambda_{i,s,1},\dots,\lambda_{i,s,K}),
\quad s=1,\dots,K.
\]
Here $\lambda_{i,s,k}>0$ encodes prior belief about transitions from
state $s$ to state $k$, with equal values yielding a symmetric
uninformative prior. Assuming independent rows, $p(A_i)=\prod_{s=1}^{K}p\bigl((A_i)_{s,\cdot}\bigr)$
which allows each row to be updated separately via
Dirichlet--multinomial conjugacy.
\end{remark}

\begin{definition}[LLM reliability under sequential dependence]
\label{def_reliability_hmm}
Let $\mathcal{X}$ denote the input space of all possible tasks for a given
LLM, and let $\pi$ be the operational profile (OP), i.e., a
probability distribution over $\mathcal{X}$.
Consider a sequence of $n \geq 1$ future tasks
$\{x_\tau\}_{\tau=1}^{n}$ drawn independently according to $\pi$, and let
$I(x_\tau) \in \{0,1\}$ denote success (1) or failure (0).

Assume that task outcomes are sequentially dependent through a
latent state process $\{Z_\tau\}$ such that
\[
I(x_\tau) \mid (x_\tau, Z_\tau = s)
\sim \mathrm{Bernoulli}(\theta(x_\tau, s)),
\]
where $\theta(x_\tau, s)$ corresponds to $\theta_{ij,s}$ when task
$x_\tau$ belongs to subdomain $S_{ij}$ and the interaction is in
state $s$.
The latent state process $\{Z_\tau\}$ evolves as a
time-homogeneous, finite-state Markov chain with state space
$\{1,\ldots,K\}$, initial distribution, and transition kernel
specified in Sec.~\ref{sec_markov_model}; within the
hierarchical model, the domain-specific processes $Z_{i,t}$
defined there are realizations of this latent state process
restricted to domain $D_i$.
Assume furthermore that the task draws $\{x_\tau\}_{\tau=1}^{n}$
are independent of the latent state process $\{Z_\tau\}$.

Then, the LLM reliability is defined as
\[
R(n,\pi)
=
p\!\left(
\bigcap_{\tau=1}^{n}\{I(x_\tau)=1\}
\right),
\]
where the probability is taken jointly over the future task sequence
$\{x_\tau\}_{\tau=1}^{n}$ and the latent state process $\{Z_\tau\}$.

\end{definition}

\begin{remark}[Dependence on model specification and parameters]
\label{remark_parameter_dependence}
\del{The quantity $R(n,\pi)$ also depends on the model specification and
parameters governing the latent state process and emission probabilities,
including the transition kernel and the success probabilities
$\{\theta_{ij,s}\}$. In the present formulation, the initial
distribution is treated as fixed (Sec.~\ref{sec_markov_model}), rather than inferred. This dependence is
suppressed in the notation, since the unknown quantities are treated as
random variables under the Bayesian framework. Their posterior
uncertainty is propagated to the reliability quantities in
Theorem~\ref{thm_posterior_reliability}.}

For simplicity, the dependence of $R(n,\pi)$ on the latent-state model and its parameters ($\psi_i =
\{\theta_{ij,s},\, A_i,\, \mu_i,\, \nu_i\}$) is suppressed in the notation. Under the Bayesian framework, these quantities are treated as random variables and their uncertainty is propagated to reliability quantities in Theorem~\ref{thm_posterior_reliability}.
\end{remark}

\robab{I merged two Remarks 7 and 8 in to Remark 7.}
\del{\begin{remark}[OP weights]
    The aggregation of reliability across subdomains and domains is governed by OP weights derived from the distribution $\pi$. Within domain $D_i$, let $\Omega_{ij} \geq 0$ denote the weight of subdomain $S_{ij}$ (the proportion of tasks drawn from subdomain $S_{ij}$ among all tasks in $D_i$) with $\sum_{j=1}^{n_i} \Omega_{ij} = 1$. At the domain level, let $W_i \geq 0$ denote the proportion of all tasks attributed to domain $D_i$, with $\sum_{i=1}^{m} W_i = 1$.
\end{remark}}

\del{\begin{remark}[Connecting $\pi$ to operational weights]
The OP $\pi$ is a distribution over the input space $\mathcal{X}$.
Within the hierarchical model, it induces a two-level mixture
structure: tasks are assigned to domains with probabilities
$\{W_i\}$ and, conditional on domain $D_i$, to subdomains with
probabilities $\{\Omega_{ij}\}$.
\\
These weights connect the abstract OP $\pi$ to the quantities
used in Theorem~\ref{thm_posterior_reliability}. In particular,
they determine the state-conditional average success probability
$\bar{\theta}_{i,s} = \sum_{j=1}^{n_i} \Omega_{ij}\theta_{ij,s}$,
which enters the domain-level reliability recursion.
\end{remark}
}

\begin{remark}[Operational-profile weights]
The OP $\pi$ is a distribution over the input space $\mathcal{X}$.
Within the hierarchical model, it induces a two-level mixture
structure: tasks are assigned to domains with probabilities
$\{W_i\}$ and, conditional on domain $D_i$, to subdomains with
probabilities $\{\Omega_{ij}\}$.

Here, $W_i\ge0$ denotes the proportion of all tasks attributed
to domain $D_i$, with $\sum_{i=1}^{m}W_i=1$, while
$\Omega_{ij}\ge0$ denotes the proportion of tasks in $D_i$
belonging to subdomain $S_{ij}$, with
$\sum_{j=1}^{n_i}\Omega_{ij}=1$.
These weights determine the state-conditional average success
probability $\bar{\theta}_{i,s}
=
\sum_{j=1}^{n_i}\Omega_{ij}\theta_{ij,s}$, 
which enters the domain-level reliability recursion in
Theorem~\ref{thm_posterior_reliability}.
\end{remark}

\del{Definition~\ref{def_reliability_hmm} provides the conceptual
definition of LLM reliability under sequential dependence.
To instantiate this definition within the proposed hierarchical
HMM, each task $x \in \mathcal{X}$ is represented by a
subdomain label $j$ and a binary outcome $Y \in \{0,1\}$,
while dependence across tasks is captured through the latent
state process. Because the corresponding reliability quantities
depend on unknown model parameters ($\psi$), they must be inferred
from data. Accordingly,
Theorem~\ref{thm_postrior_for_HMM} derives the posterior
distribution of the model parameters, and
Theorem~\ref{thm_posterior_reliability} derives the posterior
distributions of the resulting reliability measures,
instantiating Definition~\ref{def_reliability_hmm} within each
fixed domain under the hierarchical representation induced by
the operational weights $\Omega_{ij}$. At the overall LLM
level, the quantity $R_L(n;\psi)$ should be interpreted as an
operational-profile-weighted aggregation of domain-level
reliabilities, rather than as the full joint reliability over
arbitrary cross-domain task sequences; see
Remark~\ref{remark_interpret_LLM_reliability} for a precise
statement of this scope limitation.}

Definition~\ref{def_reliability_hmm} provides the conceptual
definition of LLM reliability under sequential dependence. In the
proposed hierarchical HMM, each task $x\in\mathcal{X}$ is represented
by a subdomain label $j$ and binary outcome $Y\in\{0,1\}$, while
dependence across tasks is captured by the latent state process.
Because the reliability quantities depend on unknown parameters $\psi_i =
\{\theta_{ij,s},\, A_i,\, \mu_i,\, \nu_i\}$, they must be inferred from data. Accordingly,
Theorem~\ref{thm_postrior_for_HMM} derives the posterior
distribution of $\psi$, and
Theorem~\ref{thm_posterior_reliability} derives the posterior
distributions of the resulting reliability measures, thereby
instantiating Definition~\ref{def_reliability_hmm} within each
domain through the operational weights $\Omega_{ij}$. At the LLM
level, $R_L(n, \pi;\psi)$ is an OP-weighted aggregation
of domain-level reliabilities rather than the full joint reliability
over arbitrary cross-domain task sequences; see
Remark~\ref{remark_interpret_LLM_reliability}.

\begin{theorem}[Posterior distribution for the hierarchical HMM parameters]
\label{thm_postrior_for_HMM}

Fix a domain $D_i$ and consider the hierarchical Hidden Markov
Model defined in Sec.~\ref{sec_markov_model} and Fig.~\ref{fig_hierarchical structure}, with parameter vector $\psi_i =
\{\theta_{ij,s},\, A_i,\, \mu_i,\, \nu_i\}$
where $(\mu_i,\nu_i)$ are the domain-level hyperparameters
governing the hierarchical prior on $\theta_{ij,s}$.

Given the observed sequence
$\{(j_t, Y_{i,t})\}_{t=1}^T$, the posterior distribution of
$\psi_i$ satisfies
\begin{align}
& p(\psi_i \mid Y_{i,1:T}, j_{1:T})
\propto \nonumber \\
& \hspace{1cm} \mathcal{L}_i(\theta_i,A_i)\,
p(\theta_i \mid \mu_i,\nu_i)\,
p(\mu_i,\nu_i)\,
p(A_i),
\label{eq_hmm_posterior}
\end{align}
where the marginal likelihood is
\begin{align}
&\mathcal{L}_i(\theta_i,A_i)
= \nonumber \\
& \sum_{z_{1:T}\in\mathcal{Z}^T}
p(Z_{i,1:T}=z_{1:T},Y_{i,1:T}=y_{1:T}
\mid j_{1:T},\theta_i,A_i,\delta_i).
\label{eq_hmm_marginal}
\end{align}
The marginal likelihood has no closed form due to the latent state sequence, but is finite.

\end{theorem}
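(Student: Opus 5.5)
The plan is to apply Bayes' theorem to the joint model of parameters, latent states and outcomes, and then sum out the latent state sequence. We write the full joint density implied by the generative mechanism of Sec.~\ref{sec_markov_model}, treating the labels $j_{1:T}$ and the initial distribution $\delta_i$ as fixed conditioning quantities. By the hierarchical prior structure and the independence of $A_i$ from $(\theta_i,\mu_i,\nu_i)$ assumed in Remark~\ref{remark_delta_Ai}, the joint density factorizes as
\begin{align}
& p(\psi_i, z_{1:T}, y_{1:T}\mid j_{1:T}) = p(y_{1:T}, z_{1:T}\mid j_{1:T},\theta_i,A_i,\delta_i) \nonumber\\
& \hspace{1cm} \times p(\theta_i\mid\mu_i,\nu_i)\,p(\mu_i,\nu_i)\,p(A_i), \nonumber
\end{align}
where $p(\theta_i\mid\mu_i,\nu_i)=\prod_{j,s}\mathrm{Beta}(\theta_{ij,s};\mu_i\nu_i,(1-\mu_i)\nu_i)$, $p(\mu_i,\nu_i)$ is the Beta--Gamma product, and $p(A_i)=\prod_s \mathrm{Dirichlet}((A_i)_{s,\cdot};\lambda_{i,s,\cdot})$. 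The complete-data likelihood is made explicit using the first-order Markov property and conditional independence of emissions given states:
\[
\delta_{i,z_1}\prod_{t=1}^{T-1}(A_i)_{z_t,z_{t+1}}\prod_{t=1}^{T}\theta_{i j_t,z_t}^{y_t}(1-\theta_{i j_t,z_t})^{1-y_t}.
\]

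Next we marginalize over the discrete latent path. Summing the joint over $z_{1:T}\in\mathcal{Z}^T$, with $\mathcal{Z}=\{1,\dots,K\}$, turns the complete-data likelihood into $\mathcal{L}_i(\theta_i,A_i)$ of \eqref{eq_hmm_marginal}. The prior factors do not depend on $z$, so they pull out of the sum. Dividing by the evidence $p(y_{1:T}\mid j_{1:T})$, which is constant in $\psi_i$, gives the proportionality \eqref{eq_hmm_posterior}.

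To make this rigorous, we must show that the normalizing constant is finite and positive, so that the posterior is proper. For finiteness, every summand is a product of probabilities in $[0,1]$, so $0\le\mathcal{L}_i\le K^T$. In fact $\mathcal{L}_i\le 1$, since it is the probability of the observed outcome sequence. The evidence is then at most the integral of a proper prior, and hence at most $1$. For positivity, on the full-measure set where $\theta_{ij,s}\in(0,1)$ and $\delta_i$ is uniform, we have $\mathcal{L}_i>0$. This holds because at least one path, for example a constant state path, has positive probability whenever the entries of $A_i$ are positive, which is true almost surely under Dirichlet priors. Measurability of $\mathcal{L}_i$ is immediate because it is a finite sum of polynomials in the parameters.

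The claimed absence of a closed form is the step we expect to be the main obstacle, since it is a negative statement. We would argue it informally. The sum has $K^T$ terms, and each term couples different $\theta_{ij,s}$ and $A_i$ entries through the path. As a result, the product over $t$ does not factor across states, and conjugacy of Beta and Dirichlet priors is lost. The posterior becomes a mixture of $K^T$ Beta--Dirichlet kernels, with no collapsed analytic form of the hyperparameter marginal. We would note that $\mathcal{L}_i$ is nonetheless computable exactly in $O(TK^2)$ by the forward recursion $\alpha_{t+1}(s')=\sum_s\alpha_t(s)(A_i)_{s,s'}\,p(y_{t+1}\mid s',j_{t+1})$, and this is what is used for MCMC.
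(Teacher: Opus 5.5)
Your proposal is correct and follows the paper's own route. The paper likewise factorizes the complete-data likelihood through the initial distribution, the Markov transitions and the state-dependent Bernoulli emissions, marginalizes over the $K^T$ latent paths by total probability, and applies Bayes' rule with the prior factorization $p(\theta_i\mid\mu_i,\nu_i)\,p(\mu_i,\nu_i)\,p(A_i)$. Your extra checks go beyond the paper's outline, which only asserts finiteness and the lack of a closed form: you show $\mathcal{L}_i\le 1$, you establish almost-sure positivity so that the posterior is proper, and your forward-recursion remark qualifies the ``no closed form'' claim.
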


\begin{theorem}[Posterior distributions of reliability]
\label{thm_posterior_reliability}
 
Fix a domain $D_i$ and consider the hierarchical HMM defined in
Theorem~\ref{thm_postrior_for_HMM}, with parameter vector
$\psi_i := \{\theta_{ij,s},\, A_i,\, \mu_i,\, \nu_i\}$
and posterior distribution $p(\psi_i \mid Y_{i,1:T}, j_{1:T})$.
Under this posterior, $\psi_i$ is treated as a random variable.
Consequently, the reliability quantities defined below, being
functions of $\psi_i$, are also random variables with induced
posterior distributions.
 
Let $\rho_i(s';\psi_i)$ denote the one-step-ahead predictive
distribution of the latent state associated with the next task.
For a continuing interaction session, 
\begin{align}
 & \rho_i(s';\psi_i)
  = \nonumber \\
  & \sum_{s=1}^{K}
  p\!\left(Z_{i,T}=s \;\middle|\;
    Y_{i,1:T},\, j_{1:T},\, \psi_i\right)
  (A_i)_{s,s'},
  \quad s' \in \{1,\ldots,K\} \nonumber
\end{align}
and for a new session $\rho_i(s';\psi_i)=\delta_{i,s'}$.
 
\medskip
\noindent
\textbf{Subdomain-level reliability 
}
For each subdomain $S_{ij}$, define the state-conditional
$n$ future task reliability by
\[
  R_{ij,s}(1;\psi_i) = \theta_{ij,s},
\]
\del{and, for $n \ge 2$,}
\[
  R_{ij,s}(n;\psi_i)
  =
  \theta_{ij,s}
  \sum_{s'=1}^{K} (A_i)_{s,s'}\, R_{ij,s'}(n-1;\psi_i) \qquad n \ge 2
\]
Here $\theta_{ij,s}$ is the probability that a single task from
subdomain $S_{ij}$ succeeds when the interaction is in latent
state $s$, and $R_{ij,s}(n;\psi_i)$ is the probability of
$n$ consecutive successes conditional on starting in state $s$
and on the event that all $n$ future tasks belong to subdomain
$S_{ij}$.
The corresponding \emph{conditional} subdomain reliability is
\[
  R_{ij}(n;\psi_i)
  =
  \sum_{s=1}^{K} \rho_i(s;\psi_i)\, R_{ij,s}(n;\psi_i).
\]

\medskip
\noindent
\textbf{Domain-level reliability under mixed-subdomain sequences.}
We now define the domain-level reliability measure
within domain $D_i$ for all $n \ge 1$.
We assume that the subdomain label $j_t$ is drawn independently
from the operational weights $\Omega_{i\cdot}$ at each step,
independently of the current latent state $Z_{i,t}$. This
assumes that task arrival is governed by the operational
profile, while the latent state governs performance dynamics.
Define the \emph{state-conditional average success probability} $ \bar{\theta}_{i,s}
  =
  \sum_{j=1}^{n_i} \Omega_{ij}\, \theta_{ij,s},$
which is the expected probability of success for a task drawn
from domain $D_i$ according to the OP,
conditional on the latent state being $s$. This is a
deterministic function of the existing model parameters and
operational weights.
 
Define the state-conditional $n$-task reliability under
mixed-subdomain sequences by
\[
  \bar{R}_{i,s}(1;\psi_i) = \bar{\theta}_{i,s},
\]
\[
 \bar{R}_{i,s}(n;\psi_i)
  =
  \bar{\theta}_{i,s}
  \sum_{s'=1}^{K} (A_i)_{s,s'}\, \bar{R}_{i,s'}(n-1;\psi_i) \;\; n \ge 2.
\]

Thus, conditional on starting in state $s$, the probability of
$n$ consecutive successes under mixed-subdomain usage equals the
expected probability of success on the current task,
$\bar{\theta}_{i,s}$, multiplied by the probability of $n-1$
further consecutive successes after the state transition. 
The domain reliability is: 
\[
  R_i(n;\psi_i)
  =
  \sum_{s=1}^{K} \rho_i(s;\psi_i)\, \bar{R}_{i,s}(n;\psi_i).
\]
\del{For $n=1$ this reduces to}
\[
  R_i(1;\psi_i)
  =
  \sum_{j=1}^{n_i} \Omega_{ij}
  \sum_{s=1}^{K} \rho_i(s;\psi_i)\, \theta_{ij,s}, \qquad n=1
\]
which coincides with the single-task success probability under
Definition~\ref{def_reliability_hmm}. For $n > 1$,
consistency with Definition~\ref{def_reliability_hmm} within
domain $D_i$ holds under the stated independence of $j_t$
from $Z_{i,t}$, as verified in the proof
(Appendix~\ref{sec_proof_theorem}).
 
\medskip
\noindent
\textbf{LLM-level reliability.}
The overall LLM reliability is
\[
  R_L(n;\psi)
  =
  \sum_{i=1}^{m} W_i\, R_i(n;\psi_i),
  \qquad
  \psi = \{\psi_i\}_{i=1}^{m},
\]
where $W_i \ge 0$ are domain-level operational weights with
$\sum_{i=1}^{m} W_i = 1$. This is a usage-weighted aggregation
of domain-level mixed-subdomain reliabilities: within each
domain, tasks are drawn from the OP
$\Omega_{i\cdot}$ at each step, and the results are averaged
across domains using $W_i$. It does not represent the joint success probability for sequences with arbitrary domain switching at each step. 
 
\medskip
\noindent
Since all reliability quantities are measurable functions of
$\psi_i$, their posterior distributions are obtained by
propagating the parameter posterior
$p(\psi_i \mid Y_{i,1:T}, j_{1:T})$ through these functionals.
Consequently, $R_{ij}(n;\psi_i)$, $R_i(n;\psi_i)$, and
$R_L(n;\psi)$ admit posterior distributions induced by the
parameter posterior in Theorem~\ref{thm_postrior_for_HMM}.
In particular, for any $x \in [0,1]$,
\begin{align}
  F_{R_{ij}(n)}(x)
  &=
  p \bigl(R_{ij}(n;\psi_i) \le x
    \mid Y_{i,1:T},\, j_{1:T}\bigr), \nonumber \\
  F_{R_i(n)}(x)
  &=
  p \bigl(R_i(n;\psi_i) \le x
    \mid Y_{i,1:T},\, j_{1:T}\bigr), \nonumber\\
  F_{R_L(n)}(x)
  &=
  p \bigl(R_L(n;\psi) \le x
    \mid \{Y_{i,1:T},\, j_{1:T}\}_{i=1}^{m}\bigr) \nonumber 
\end{align}
These posteriors lack closed forms and are approximated via
Monte Carlo sampling.
 
\end{theorem}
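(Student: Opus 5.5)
The plan is to verify the recursions directly from Definition~\ref{def_reliability_hmm}, conditioning on $\psi_i$ throughout, and then obtain the posterior statements by pushing the posterior of Theorem~\ref{thm_postrior_for_HMM} through measurable maps.

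\textbf{Step 1 (starting-state distribution).} First justify $\rho_i$. Conditional on $\psi_i$ and the data, the first future task corresponds to time $T+1$. Its state distribution is
\[
p(Z_{i,T+1}=s'\mid Y_{i,1:T},j_{1:T},\psi_i)=\sum_s p(Z_{i,T}=s\mid Y_{i,1:T},j_{1:T},\psi_i)\,(A_i)_{s,s'} .
\]
This follows by the law of total probability and the Markov property: $Z_{i,T+1}$ is conditionally independent of $(Y_{i,1:T},j_{1:T})$ given $Z_{i,T}$. For a new session the state is freshly drawn from $\delta_i$. The filtering probabilities are finite and well defined because the likelihood \eqref{eq_hmm_marginal} is finite and positive. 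It is positive since $\theta_{ij,s}\in(0,1)$.

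\textbf{Step 2 (state-conditional recursion by induction on $n$).} Write $E_n$ for the event of $n$ consecutive successes starting at the first future step $\tau=1$. For the mixed-subdomain case, condition on $Z_1=s$, where $Z_\tau$ denotes the future state process. The label $j_1$ is drawn from $\Omega_{i\cdot}$ independently of $Z_1$, so
\[
p(I_1=1\mid Z_1=s)=\sum_j\Omega_{ij}\theta_{ij,s}=\bar\theta_{i,s}.
\]
For $n\ge2$, factor $p(E_n\mid Z_1=s)=p(I_1=1\mid Z_1=s)\,p(E_{2:n}\mid I_1=1,Z_1=s)$. The key step is the conditional independence claim:
\[
p(E_{2:n}\mid I_1=1,Z_1=s)=\sum_{s'}(A_i)_{s,s'}\,p(E_{2:n}\mid Z_2=s').
\]
It holds because, given $Z_1$, the quantities $(j_1,I_1)$ depend only on $Z_1$ and fresh randomness, while $Z_2$ depends only on $Z_1$. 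The future labels and outcomes depend only on $Z_{2:n}$ and their own independent draws. Time homogeneity then gives $p(E_{2:n}\mid Z_2=s')=\bar R_{i,s'}(n-1)$, which closes the induction. The single-subdomain recursion for $R_{ij,s}$ follows identically with $\Omega$ replaced by a point mass on $j$.

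\textbf{Step 3 (averaging and aggregation).} Averaging over $Z_1\sim\rho_i$ gives $R_{ij}(n;\psi_i)$ and $R_i(n;\psi_i)$. For $n=1$, swapping the finite sums gives the stated formula, which matches Definition~\ref{def_reliability_hmm}. This is where the assumed independence of the task draws from $\{Z_\tau\}$ is used: it is exactly what licenses replacing the label draw by the average $\bar\theta_{i,s}$ at every step. $R_L$ is then simply defined as a finite convex combination, so nothing needs to be proved beyond well-definedness.

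\textbf{Step 4 (posterior distributions).} Each reliability quantity is a polynomial in the entries of $\theta_i$ and $A_i$ and in the filtering probabilities. The filtering probabilities are ratios of forward-algorithm polynomials with a positive denominator. All these maps are therefore continuous, hence Borel measurable, on the parameter space. The induced distributions and CDFs $F_{R(n)}$ thus exist under $p(\psi_i\mid\cdot)$. For $R_L$, use the product of the domain posteriors; this relies on domains sharing no parameters. Monte Carlo approximation follows by evaluating these maps at posterior draws.

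\textbf{Main obstacle.} I expect the main obstacle to be Step 2. One must state carefully that conditioning on the success $I_1=1$ carries no further information about $Z_2$ beyond $Z_1$. This is precisely the emission structure of the HMM: $Y_t$ depends only on $Z_t$ and $j_t$. It must be checked jointly with the independent label draws, since the paper's construction only asserts this informally.
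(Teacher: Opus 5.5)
Your proposal is correct and follows essentially the same route as the paper's proof (Steps 4--5 of the appendix): an induction on $n$ for $\bar R_{i,s}(n;\psi_i)$ using the law of total probability, the Markov property, conditional independence of the current outcome given $(Z_1,j_1)$, and independence of $j_1$ from $Z_1$, followed by pushing the posterior of Theorem~\ref{thm_postrior_for_HMM} forward through measurable maps. Your extra details only make explicit what the paper leaves implicit: invoking time homogeneity in the inductive step, justifying $\rho_i$ through the Markov property with a positive likelihood, deriving measurability from continuity of the forward-algorithm ratios, and building the joint posterior for $R_L$ as a product of the domain posteriors.
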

 
\begin{remark}[Interpretation of domain- and LLM-level reliability]
\label{remark_interpret_LLM_reliability}

\del{The domain-level reliability $R_i(n;\psi_i)$ represents the probability
that $n$ consecutive tasks drawn from domain $D_i$ according to the
OP all succeed, where at each step the subdomain label
is drawn independently from $\Omega_{i\cdot}$ and dependence across
tasks arises solely through the latent state process. Within domain
$D_i$, this construction instantiates
Definition~\ref{def_reliability_hmm} when the OP is
restricted to subdomains of $D_i$.}

\del{In contrast, the LLM-level quantity $R_L(n;\psi)$ is not, in the present
formulation, the full joint reliability $R(n,\pi)$ from
Definition~\ref{def_reliability_hmm}$.$ Rather, it is an
OP-weighted aggregation of the domain-level
reliabilities $\{R_i(n;\psi_i)\}_{i=1}^m$. It therefore does not model
task sequences in which the domain itself may switch across time steps
according to the global OP. Extending to full cross-domain mixing requires an additional stochastic model and is left for future work.} 

\del{The independence assumption $j_t \perp Z_{i,t}$ is the hierarchical
counterpart of the assumption in
Definition~\ref{def_reliability_hmm} that future task draws are
independent of the latent state process. 
The OP governs which
tasks arrive (an external property of the deployment environment), while
the latent state captures how well the model handles them (an internal
performance dynamic). We adopt it as a tractable baseline,
treating the OP as external to model behavior
(Sec.~\ref{sec_discussion_conclusion}).}

\del{The subdomain-level quantity $R_{ij}(n;\psi_i)$ is retained as a
conditional diagnostic (within subdomain): it gives the probability of
$n$ consecutive successes when all tasks come from subdomain $S_{ij}$,
and is useful for identifying which task types are most sensitive to
interaction state or most responsible for degraded sustained
performance. The parameters $\theta_{ij,s}$ are still estimated
individually for every (subdomain $j$, state $s$) pair; no information
is lost at the estimation stage. The averaging into
$\bar{\theta}_{i,s}$ occurs only when computing the deployment-relevant
$n$-step reliability, and the within-subdomain recursion
$R_{ij,s}(n;\psi_i)$ remains available for targeted subdomain analysis.}

The domain-level reliability $R_i(n;\psi_i)$ is the probability that
$n$ consecutive tasks drawn from domain $D_i$ according to the OP all
succeed, where subdomain labels are sampled independently from
$\Omega_{i\cdot}$ and dependence arises solely through the latent state
process. Within domain $D_i$, this corresponds to
Definition~\ref{def_reliability_hmm} restricted to the subdomains of
$D_i$.

By contrast, the LLM-level quantity $R_L(n;\psi)$ is not the full joint
reliability $R(n,\pi)$ of Definition~\ref{def_reliability_hmm}, but an
OP-weighted aggregation of the domain-level reliabilities
$\{R_i(n;\psi_i)\}_{i=1}^m$. It therefore does not model sequences in
which domains themselves switch over time according to the global OP.
Extending the framework to full cross-domain mixing requires an
additional stochastic model and is left for future work.
\end{remark}

\begin{remark}[Separation of task arrivals and performance dynamics]
The assumption $j_t \perp Z_{i,t}$ is the hierarchical analogue of the
independence assumption in Definition~\ref{def_reliability_hmm}. The OP
governs task arrivals, while the latent state governs performance
dynamics. We adopt this separation as a tractable baseline and treat
the OP as external to model behavior
(Sec.~\ref{sec_discussion_conclusion}).
\end{remark}

\begin{remark}[Role of subdomain-level reliability]
The subdomain-level quantity $R_{ij}(n;\psi_i)$ is retained as a
conditional diagnostic: it gives the probability of $n$ consecutive
successes when all tasks belong to subdomain $S_{ij}$ and helps identify
task types most sensitive to interaction state or degraded sustained
performance. Parameters $\theta_{ij,s}$ are still estimated separately
for each (subdomain, state) pair; averaging into
$\bar{\theta}_{i,s}$ occurs only when computing deployment-relevant
$n$-step reliability, while the within-subdomain recursion
$R_{ij,s}(n;\psi_i)$ remains available for targeted analysis.
\end{remark}

\section{Evaluation}
\label{sec_numerical_example}

\subsection{Problem Setting and Data Construction}
\label{sec_problem_setting}

\paragraph{Domains and subdomains of the model}
We consider two domains—\emph{Reasoning} and \emph{Coding}—each comprising two benchmark-based subdomains. The Reasoning domain includes \textbf{BoolQ} (binary yes/no question answering) and \textbf{RACE-H} (multiple-choice reading comprehension), while the Coding domain includes \textbf{DS1000} (data science programming tasks) and \textbf{MBPP} (Python programming problems).

\paragraph{Task outcomes and implementation details}
Tasks from all datasets are evaluated using  Anthropic Claude (\texttt{claude-sonnet-4.6}) and OpenAI (\texttt{gpt-5.2}).
For BoolQ and RACE-H, correctness is determined by matching predicted answers with ground truth. For DS1000 and MBPP, correctness is determined by executing the generated code and verifying test cases. Unless explicitly stated otherwise, we apply the following ordering scheme: Reasoning domain-BoolQ followed by RACE-H; Coding domain-DS1000 followed by MBPP. We refer to this as the original scheme. This ordering is benchmark-derived and is used to construct controlled sequential sessions for evaluating the proposed methodology; it is not intended to represent naturally occurring user interaction sequences in deployment. Table~\ref{tab_ordering_sensitivity} reports results under alternative task orderings.

\paragraph{Sequential-session protocol}
For each domain, benchmark tasks were executed within a single continuous conversation. Each task was submitted as a new user message while preserving the complete conversation history, so task $t$ was evaluated using the accumulated interaction history from tasks $1,\ldots,t-1$. The system prompt remained fixed throughout the session, with no intermediate resets within a domain. Context was reset only when switching between domains, so the Reasoning and Coding domains were evaluated in separate conversations. No context-window truncation occurred because the accumulated context remained below the model's maximum context length.

For HMM inference, each domain session is represented as an observed sequence
$\{(j_t,y_t)\}_{t=1}^{T}$, where $j_t$ denotes the subdomain
(BoolQ/RACE-H for the Reasoning and DS-1000/MBPP for the Coding),
and $y_t\in\{0,1\}$ is the binary task outcome obtained by automated evaluation.
Table~\ref{tab_session_example} illustrates this protocol using the first three
tasks from a Coding-domain (DS-1000) session. The ``Tokens'' column reports the
approximate size of the accumulated conversation history before each task. Its
increasing values show that previous prompts and model
responses remain available to the LLM throughout the session, demonstrating
that context is retained.

\begin{table}[t]
\centering
\caption{Example excerpt from a continuous OpenAI Coding domain (DS-1000) session.}
\label{tab_session_example}
\scriptsize
\begin{tabular}[!htbp]{p{0.13 cm} p{0.13cm} p{0.5cm} p{2.4cm} p{2.9cm} p{0.13cm}}
\hline
$t$ &
$j_t$ &
Tokens &
{Prompt} &
LLM response&
$y_t$\\
\hline
1 & DS & 0 &
\makecell[l]{Draw a full line\\ from $(0,0)$ to $(1,2)$.} &
\makecell[l]{\texttt{plt.plot}\\\texttt{([0,1],[0,2])...}} &
0\\

2 & DS & 103 &
\makecell[l]{Count connected \\ regions above 0.75.} &
\makecell[l]{\texttt{mask = img >}\\ \texttt{threshold; ...}} &
1\\

3 & DS & 442 &
\makecell[l]{Find centres of mass\\ of connected regions.} &
\makecell[l]{\texttt{centers = ndimage.}\\ \texttt{center\_of\_mass(...)}} &
0\\
\hline
\end{tabular}
\end{table}

\paragraph{Operational profile} 
\del{Within each
domain $D_i$, the weights $\Omega_{ij}$ are used to form the
state-conditional average success probability
$\bar{\theta}_{i,s} = \sum_{j=1}^{n_i} \Omega_{ij}\theta_{ij,s}$, entering the domain-level reliability recursion directly.
At the LLM level, domain-level reliabilities are aggregated
using the domain weights $W_i$.}

The baseline experiments use OP weights proportional to benchmark dataset sizes. Since these weights are determined by benchmark composition rather than real-world usage, a sensitivity analysis to alternative OP assumptions is also performed (Sec.~\ref{sec_evaluation_reliability}).

\paragraph{Model comparison setup}
\del{We compare reliability under the \textbf{HMM} with the \textbf{HIP} model~\cite{aghazadeh_hierarchical_2025}, which assumes independent outcomes; both share the same hierarchical structure (Fig.~\ref{fig_hierarchical structure}). For the HMM, the number of latent states $K$ is selected per domain using the predictive log score (Sec.~\ref{sec_evaluation_reliability}).}
We compare the HMM with the HIP model~\cite{aghazadeh_hierarchical_2025}, which assumes independent outcomes; both share the same hierarchical structure (Fig.~\ref{fig_hierarchical structure}). For the HMM, the number of latent states $K$ is selected separately for each domain using the predictive log score (Sec.~\ref{sec_evaluation_reliability}).

\paragraph{Reliability estimation}
\del{Given the observed sequences $\{(j_t, y_t)\}_{t=1}^T$, posterior
distributions of reliability are obtained via MCMC sampling.
For each posterior sample, the state-conditional average success
probability $\bar{\theta}_{i,s}^{(m)} = \sum_j \Omega_{ij}
\theta_{ij,s}^{(m)}$ is computed and fed into the $n$-step
recursion, weighted by the predictive state distribution to
obtain domain-level reliability. Subdomain-level quantities
$R_{ij}(1;\psi_i)$ are computed separately for Fig.~\ref{fig_subdomain_claude_gpt}.
In the HIP setting, subdomain reliability is obtained directly
and aggregated using operational-profile weights. Reliability
is evaluated at $n=1$ for Figs.~\ref{fig_pdf_claude_gpt} and~\ref{fig_PDF_HIP_HMM}, Table.~\ref{tab_ordering_sensitivity}, and at
$n=1,\dots,20$ for Fig.~\ref{fig_Reliability_n_future_task}.}
The posterior distribution in Theorem~\ref{thm_postrior_for_HMM}
has no closed form and is approximated via MCMC. We use a Gibbs
sampler with conjugate updates for $\theta_{ij,s}$ and the rows of $A_i$, a Metropolis--Hastings step for the non-conjugate
hyperparameters $(\mu_i,\nu_i)$, and forward--backward sampling
for the latent states. The sampler is run for $12{,}000$ iterations,
with the first $2{,}000$ discarded as burn-in and every second sample
retained, yielding $5{,}000$ posterior samples. Random seeds are
fixed, and convergence is assessed through trace plots and stability
of posterior summaries.

For each posterior sample, the state-conditional average success
probability
$\bar{\theta}_{i,s}^{(m)}=\sum_j\Omega_{ij}\theta_{ij,s}^{(m)}$
is computed and used in the $n$-step recursion, weighted by the
predictive state distribution, to obtain domain-level reliability.
Subdomain-level quantities $R_{ij}(1;\psi_i)$ are computed separately (Fig.~\ref{fig_subdomain_claude_gpt}). Under HIP, subdomain
reliabilities are obtained directly and aggregated using
OP weights. Except for Fig.~\ref{fig_Reliability_n_future_task}, reliability is evaluated at n=1 in all tables and figures, where applicable.

\del{The posterior distribution in Theorem~\ref{thm_postrior_for_HMM}
does not admit a closed-form expression and is approximated via
Markov chain Monte Carlo (MCMC). The sampler follows a Gibbs
scheme with conjugate updates for $\theta_{ij,s}$ and the rows
of $A_i$, and a Metropolis--Hastings step for $(\mu_i,\nu_i)$,
whose full conditional is non-conjugate. Latent state sequences
are sampled via forward--backward procedures within this scheme.
We run $12{,}000$ iterations, discard the first $2{,}000$ as burn-in,
and retain every second sample, yielding $5{,}000$ posterior samples.
Random seeds are fixed for reproducibility. Convergence of the Gibbs sampler
was monitored informally through trace plot inspection and stability of
posterior summaries.}

\subsection{Numerical Results}
\label{sec_evaluation_reliability}

\paragraph{Determining the number of latent interaction states $K$}
\label{sec_evaluation_state_number}

\del{$K$ is selected using out-of-sample sequential predictive log
scores (Remark~\ref{remark_number_of_states}), choosing the
smallest $K$ beyond which improvements are negligible
(Table.~\ref{tab_log_score}). For Claude, $K=3$ is selected for
Reasoning and $K=1$ for Coding. For OpenAI, $K=1$ is selected
for Reasoning and $K=3$ for Coding. The gain from $K=3$ to
$K=4$ is negligible in all nontrivial domains, and the
increased computational cost ($\mathcal{O}(TK^2)$,
i.e., $1.78\times$ from $K=3$ to $K=4$) further supports
$K=3$ as the parsimonious choice.}

The number of latent states is selected using chronological out-of-sample predictive log scores (Remark~\ref{remark_number_of_states}). Table~\ref{tab_log_score} shows that multiple latent states improve predictive performance for Claude Reasoning and OpenAI Coding. However, the gains diminish as $K$ increases: the improvement from $K=3$ to $K=4$ (Claude) is small relative to the increase in model complexity and computational cost $(\mathcal{O}(TK^2))$, corresponding to a 1.78$\times$ increase in computation. We therefore adopt ($K=3$) as a parsimonious choice for domains exhibiting temporal dependence. This choice balances predictive performance and complexity and should not be interpreted as identifying the true number of latent interaction states.

The predictive log-score analysis motivates the choice of $K=3$ for the Claude Reasoning domain. We therefore examine the resulting fitted HMM by analyzing its learned transition matrix and state-dependent success probabilities to assess whether the inferred latent states correspond to persistent interaction regimes, as discussed in Remark~\ref{remark_interpreting_states}. This analysis is presented for the Claude Reasoning domain at the end of Sec.~\ref{sec_evaluation_reliability}.

\begin{table}[!htbp]
\centering
\caption{Predictive log scores for different values of $K$.}
\label{tab_logscores}
\begin{tabular}{ccccc}
\toprule
& \multicolumn{2}{c}{\textbf{Claude}} 
& \multicolumn{2}{c}{\textbf{OpenAI}} \\
\cmidrule(lr){2-3} \cmidrule(lr){4-5}
$K$ & Reasoning & Coding & Reasoning & Coding \\
\midrule
1 & -72.427 & \textbf{-21.539} & \textbf{-70.203} & -25.633 \\
2 & -72.228 & -23.120          & -70.374          & -25.404 \\
3 & \textbf{-72.215} & -23.047 & -70.379          & \textbf{-25.363} \\
4 & -72.212 & -22.950          & -70.369          & -25.341 \\
\bottomrule
\end{tabular}
\label{tab_log_score}
\end{table}

\paragraph{\del{HMM-based reliability comparison between Claude and OpenAI} HMM-based reliability: Claude vs OpenAI}
\del{Figure~\ref{fig_pdf_claude_gpt} compares PDF for the next future task across different levels for Claude and OpenAI.
At the subdomain level, reasoning subdomains show high reliability for both models. OpenAI's reasoning estimates are tightly concentrated because 
$K=1$ was selected, meaning a single latent state governs all 
outcomes and no state-switching uncertainty inflates the posterior 
variance. Claude's reasoning, with $K=3$, produces a wider 
posterior, as the reliability estimate integrates over three 
latent states with potentially different success probabilities. In coding, Claude has a single
state with moderate reliability, while OpenAI shows three
states with lower and more uncertain estimates
(stronger sequential structure). At the domain and LLM levels, reasoning dominates due to higher weight, with Claude showing slightly greater uncertainty.}
Figure~\ref{fig_pdf_claude_gpt} compares the PDFs of next-task reliability across hierarchical levels for Claude and OpenAI. At the subdomain level, both models exhibit high reliability in reasoning. OpenAI's reasoning estimates are tightly concentrated because $K=1$, whereas Claude's wider posterior reflects uncertainty across three latent states ($K=3$). In coding, Claude has a single state with moderate reliability, while OpenAI exhibits three states with lower and more uncertain estimates, indicating stronger sequential structure. At the domain and LLM levels, reasoning dominates because of its higher weight, with Claude showing slightly greater uncertainty.
\begin{figure}[!htbp]
    \centering
\begin{subfigure}[t]{0.45
    \textwidth}
	\centering	\includegraphics[width=0.7 \linewidth]{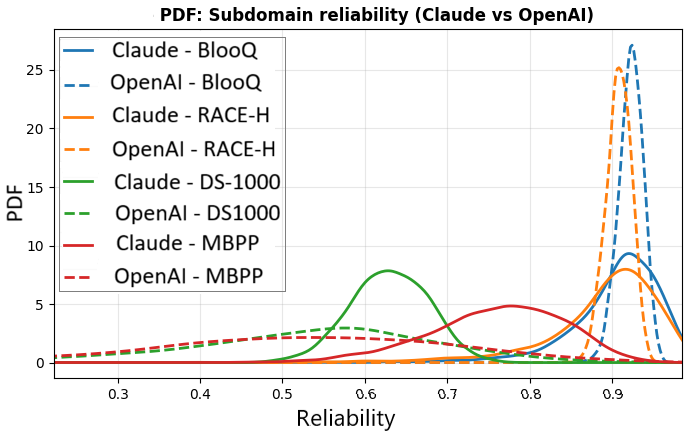}
	\caption{Posterior PDFs of reliability in subdomains under HMM.}	\label{fig_subdomain_claude_gpt}
\end{subfigure}
	\hspace{0.3 cm}
\begin{subfigure}[t]{0.45\textwidth}
	\centering
\includegraphics[width=0.75\linewidth]{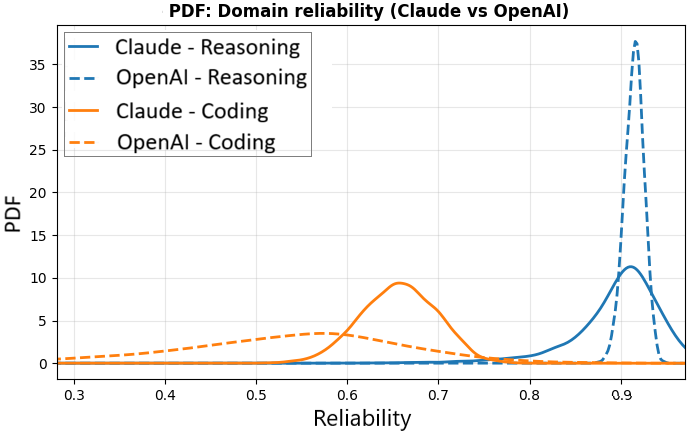}
	\caption{Posterior PDFs of reliability in domain level under HMM.}	
    \label{fig_domain_claude_gpt}
\end{subfigure}
\hspace{0.3 cm}
\begin{subfigure}[t]{0.45\textwidth}
	\centering
\includegraphics[width=0.75\linewidth]{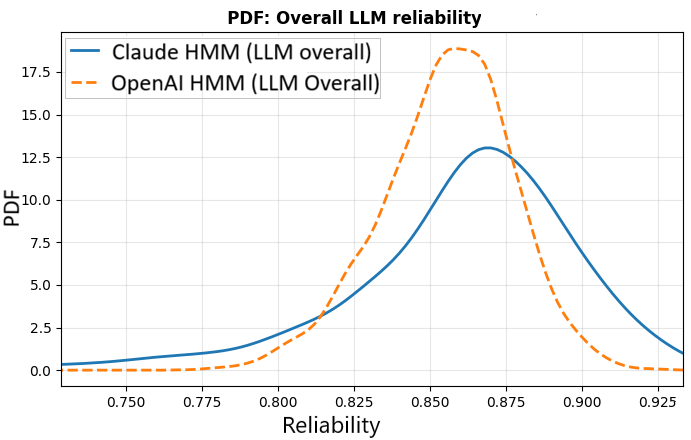}
	\caption{Posterior PDF of reliability in LLM level under HMM.}
    \label{fig_LMM_claude_gpt}
\end{subfigure}
\caption{Comparison of posterior reliability distributions at different levels under the HMM for Claude and OpenAI.}
\label{fig_pdf_claude_gpt}
\end{figure}

\paragraph{Effect of the task horizon on reliability under HMM}
\del{Figure~\ref{fig_Reliability_n_future_task} shows posterior mean LLM reliability with $95\%$ credible intervals for $n=1,\ldots,20$. Reliability decreases monotonically for both models—from similar values at $n=1$ to $0.130$ and $0.146$ at $n=20$—with Claude exhibiting wider intervals, indicating greater uncertainty.}
Figure~\ref{fig_Reliability_n_future_task} shows posterior mean LLM reliability and 95\% credible intervals for $n=1,\ldots,20$. Reliability decreases monotonically for both models, reaching $0.130$ and $0.146$ at $n=20$, with Claude exhibiting wider intervals and therefore greater uncertainty.

\begin{figure}[!h]
    \centering
    \includegraphics[width=0.65\linewidth]{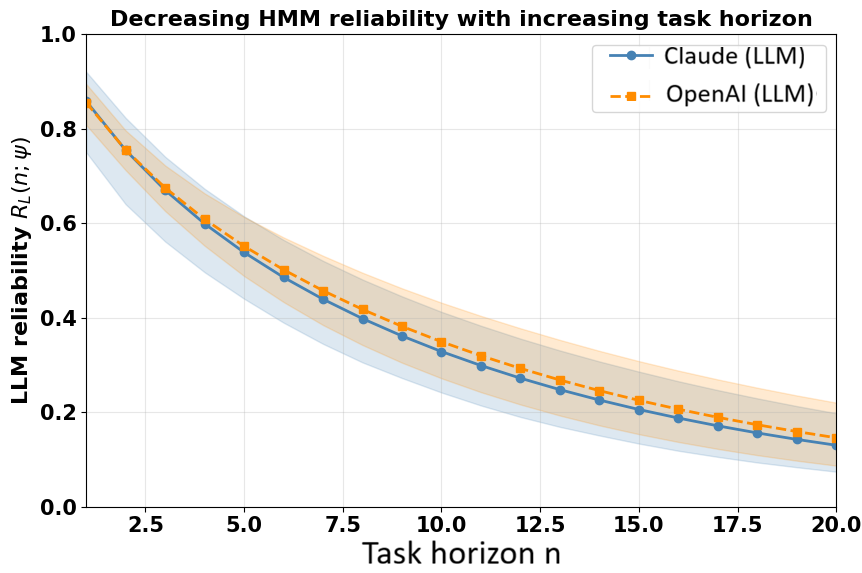}
    \caption{HMM reliability decreases with $n$ in both models.}
    \label{fig_Reliability_n_future_task}
\end{figure}

\paragraph{Comparison between HIP (independent outcomes) and HMM reliability distributions}

\del{Figure~\ref{fig_PDF_HIP_HMM} compares posterior reliabilities under the HIP model~\cite{aghazadeh_hierarchical_2025} and HMM.}

\del{In the coding domain (Figure~\ref{fig_PDF_coding_HIP_HMM}), the distributions nearly coincide, indicating that latent states do not materially affect reliability estimates. This aligns with $K=1$, suggesting approximately independent outcomes.}

\del{In the reasoning domain (Fig.~\ref{fig_PDF_reasoning_HIP_HMM}), the HIP yields a peaked distribution, indicating high confidence in a narrow range, while the HMM produces a wider one. This reflects sequential dependence: success probabilities vary with the latent state, and ignoring this leads HIP to treat correlated observations as independent, understating uncertainty. This effect propagates to the LLM level (Figure~\ref{fig_PDF_LLM_HIP_HMM}), where HIP yields a narrow distribution while HMM produces a broader, heavy-tailed one.}
Figure~\ref{fig_pdf_hmm_hip} compares posterior reliabilities under the HIP (i.i.d.) model~\cite{aghazadeh_hierarchical_2025} and HMM for Claude. In Coding (red plots), the distributions nearly coincide, consistent with $K=1$ and approximately independent outcomes. In Reasoning (blue plots), the HIP yields a narrower distribution than the HMM. This reflects sequential dependence: success probabilities vary across latent states, and ignoring this causes HIP to treat correlated observations as independent, understating uncertainty. This effect propagates to the LLM level (green plots), where HIP yields a narrow distribution and HMM a broader, heavy-tailed one.

\begin{figure}
    \centering
    \includegraphics[width=0.7\linewidth]{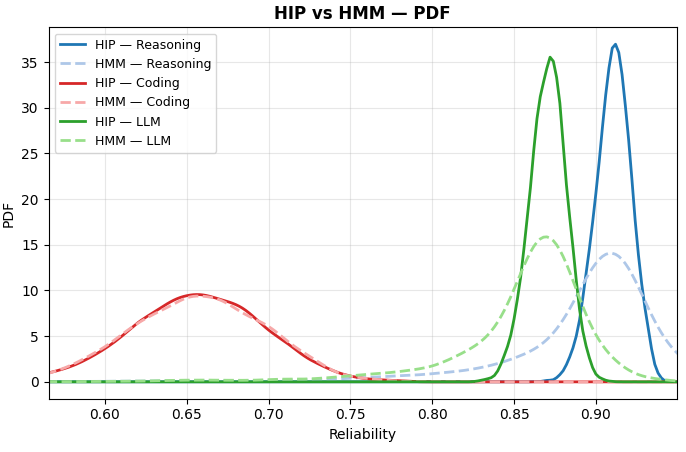}
    \caption{
    Posterior reliability PDFs for Claude under HIP and HMM. Differences are negligible in Coding ($K=1$) but substantial in Reasoning ($K=3$) and overall, with greater HMM uncertainty.}
    \label{fig_pdf_hmm_hip}
\end{figure}

\paragraph{Sensitivity to task ordering}
\del{Because benchmark datasets do not define a unique operational sequence, we assess the sensitivity of HMM-based reliability to task ordering using Claude-generated outcomes. We fix the observed pairs $(j_t, y_t)$ and vary only their order using three schemes: original (Sec.~\ref{sec_problem_setting}), full shuffle, and block shuffle (random permutation of contiguous blocks). For each sequence, we refit the HMM with fixed domain-specific $K$ ($K=3$ for Reasoning, $K=1$ for Coding).}

\del{We report posterior means and 95\% credible interval (CI) widths (Table~\ref{tab_ordering_sensitivity}). In coding, results are highly stable, with negligible changes across orderings, indicating weak sequential dependence. In contrast, reasoning shows greater sensitivity: while mean reliability remains similar, credible interval width varies across orderings, increasing under full shuffling and decreasing under block shuffling. This indicates that temporal structure influences inferred dependence, although the main conclusion remains that the HMM captures substantial uncertainty not explained by a single ordering.}

Because benchmark datasets do not define a unique operational sequence, we test ordering sensitivity using Claude-generated outcomes. We keep the pairs $(j_t,y_t)$ fixed and reorder them using three schemes: original (Sec.~\ref{sec_problem_setting}), full shuffle, and block shuffle. For each sequence, we refit the HMM with fixed domain-specific $K$ ($K=3$ for Reasoning, $K=1$ for Coding). Table~\ref{tab_ordering_sensitivity} reports means and 95\% CI widths. Coding is stable across orderings, indicating weak sequential dependence. Reasoning is more sensitive: mean reliability remains similar, but CI width increases under full shuffling and decreases under block shuffling. Thus, temporal structure affects inferred dependence, while the main conclusion remains unchanged.

\begin{table}[!htbp]
\centering
\caption{Sensitivity of posterior reliability to task ordering.}
\begin{tabular}{lccc}
\toprule
Domain & Ordering & Mean & CI width \\
\midrule
Coding & Original      & 0.6563 & 0.1608 \\
Coding & Full shuffle  & 0.6562 & 0.1625 \\
Coding & Block shuffle & 0.6559 & 0.1628 \\
\midrule
Reasoning & Original      & 0.8933 & 0.2182 \\
Reasoning & Full shuffle  & 0.8892 & 0.2279 \\
Reasoning & Block shuffle & 0.8945 & 0.2047 \\
\bottomrule
\end{tabular}
\label{tab_ordering_sensitivity}
\end{table}

\paragraph{Synthetic Validation}
\label{sec_synthetic_validation}

To provide evidence for the HMM beyond differences in posterior uncertainty, we conduct a controlled synthetic experiment with a known data-generating process. Data are generated under two settings: an HMM with a high self-transition probability (0.90) and an independent process. We compare a Beta-Binomial\footnote{It serves as a reference to show that the HMM's advantage over i.i.d.\ models is not solely due to emission flexibility.} baseline, a hierarchical i.i.d.\ model (HIP~\cite{aghazadeh_hierarchical_2025}), and the HMM (Reasoning: $K=3$, Coding: $K=1$ for Claude, Table~\ref{tab_logscores}).

Table~\ref{tab_synthetic_hmm_validation_fixed_delta} provides two findings supporting the proposed model. First, under the dependent process, the HMM achieves a better out-of-sample predictive log score than the HIP at the Reasoning domain level ($-716.73$ vs $-737.65$), indicating improved sequential prediction. It also achieves the smallest estimation error ($0.0006$ versus $0.0075$ for HIP and $0.0074$ for the Beta-Binomial baseline), because the strong sequential dependence makes the latent-state structure easier to learn. Second, under the independent process, the HMM and HIP achieve nearly identical log scores ($-718.23$ vs $-717.93$), while the HMM error increases. This is expected because, in the absence of sequential structure, the additional states introduce unnecessary variance, and the predictive log score correctly favors $K=1$, consistent with the real-data results.
\begin{table}[!htbp]
\centering
\caption{Validation under known processes.}
\label{tab_synthetic_hmm_validation_fixed_delta}
\begin{tabular}{llccc}
\hline
\textbf{Scenario} & \textbf{Model} &
\textbf{Error} &
\textbf{CI Width} &
\textbf{Log Score} \\
\hline
Dependent
& HMM & 0.0006 & 0.1621 & -716.73 \\
& HIP & 0.0075 & 0.0244 & -737.65 \\
& BB  & 0.0074 & 0.0237 & N/A \\
\hline
Independent
& HMM & 0.0217 & 0.1457 & -718.23 \\
& HIP & 0.0080 & 0.0257 & -717.93 \\
& BB  & 0.0081 & 0.0252 & N/A \\
\hline
\end{tabular}
\end{table}

\paragraph{Sensitivity to OPs}
\label{sec_op_sensitivity}


To assess sensitivity to the OP, we recomputed the HMM-based Reasoning-domain reliability for Claude under four alternative OPs while keeping the HMM posterior fixed and varying only the weights used to aggregate BoolQ and RACE-H reliabilities. The OPs comprise a Uniform OP, a Benchmark-frequency OP, and two deployment-skewed OPs favoring either BoolQ or RACE-H (90 \% vs 10\%).

Table~\ref{tab_op_sensitivity_claude_reasoning} shows that the posterior mean reliability is relatively stable across OPs (0.8905--0.9027), with the highest value under the BoolQ-skewed OP and the lowest under the RACE-H-skewed OP, consistent with BoolQ's slightly higher empirical accuracy. In contrast, uncertainty is more sensitive to the OP assumption: the credible-interval width increases from about 0.20 under the Uniform and Benchmark-frequency OPs to 0.2652 under the RACE-H-skewed OP. Thus, the assumed deployment distribution affects both the reliability estimate and the uncertainty of the resulting reliability claim.
\begin{table}[!htbp]
\centering
\caption{OP sensitivity of Claude Reasoning reliability.}
\label{tab_op_sensitivity_claude_reasoning}
\begin{tabular}{p{2.75cm}p{0.85cm}p{0.85cm}p{1cm}c}
\hline
\textbf{OP assumption} &
$\boldsymbol{\Omega_{\mathrm{BoolQ}}}$ &
$\boldsymbol{\Omega_{\mathrm{RACE}}}$ &
\textbf{Mean} &
\textbf{CrI width}\\
\hline
Uniform & 0.50 & 0.50 & 0.8966 & 0.2004 \\
Benchmark-frequency& 0.483 & 0.517 & 0.8963 & 0.2025\\
BoolQ-skewed & 0.90 & 0.10 & 0.9027 & 0.2072 \\
RACE-skewed& 0.10 & 0.90 & 0.8905 & 0.2652 \\
\hline
\end{tabular}
\end{table}

\paragraph{Interpretation of learned latent states}
\label{sec_interpretation_latent_state}
Following Remark~\ref{remark_interpreting_states}, we examine the learned latent states through their transition dynamics and state-dependent success probabilities. The states are associated with distinct observed success rates: 0.950 (State 1), 0.162 (State 2), and 0.660 (State 3). The estimated self-transition probabilities for the Claude Reasoning domain are $(0.9778, 0.9385, 0.9521)$ (matrix~A) for States 1--3, respectively, indicating that the inferred states tend to persist across consecutive tasks rather than changing rapidly.
\[
A=
\begin{bmatrix}
0.9778 & 0.0180 & 0.0042\\
0.0120 & 0.9385 & 0.0495\\
0.0300 & 0.0179 & 0.9521
\end{bmatrix}
\]

Although the latent states are not directly observable and have no predefined physical interpretation, they are data-driven statistical constructs inferred from the observed benchmark outcomes rather than internal LLM states. Since LLMs remain largely black boxes, a precise interpretation of these states is not currently feasible. Nevertheless, their distinct state-dependent success probabilities and persistence are consistent with the predictive selection of $K=3$ for the benchmark-derived sequential sessions.

\section{Discussion and conclusion}
\label{sec_discussion_conclusion}

\del{This work highlights the importance of accounting for temporal
dependence when assessing LLM reliability under realistic usage
conditions. While traditional approaches assume independence,
practical deployments involve continuous interactions in which
performance (success probabilities $\theta_{ij,s}$) may evolve
over time due to contextual memory and interaction dynamics.
As a result, reliability is not purely a static property, but
depends on how performance evolves across sequences.
The proposed HMM captures this evolution through a latent
interaction state inferred from observed outcomes. This state
can be interpreted as reflecting interaction conditions such
as context quality or reasoning stability, though this remains
a modeling assumption. Experimentally, the impact of accumulated
context on performance is evident, though its magnitude varies across domains.} 
This work demonstrates that accounting for temporal dependence can affect LLM reliability assessment. By extending a hierarchical Bayesian reliability framework with a Hidden Markov Model, we capture sequential dependence through a latent interaction state and obtain posterior reliability distributions at the subdomain, domain, and LLM levels. Across the benchmark-constructed interaction sessions considered in this study, the proposed approach consistently produces broader reliability posteriors when sequential dependence is present, indicating that models assuming independent task outcomes may underestimate uncertainty and yield overconfident reliability claims. The synthetic validation study further shows that the HMM provides improved predictive performance when the data-generating process contains persistent sequential structure, while reducing to behavior similar to the independent model when such structure is absent.

\del{As shown in experiments, different task orderings lead to 
measurable differences in posterior estimates and uncertainty, 
suggesting that standard benchmark protocols relying on fixed 
ordering and aggregated outcomes may overlook temporal dynamics 
and underestimate uncertainty.}



The experimental setup is intended as a proof-of-concept assessment of the proposed framework rather than a direct characterization of real-world LLM reliability. Although Table~IV shows that the main conclusions are robust across alternative task orderings, the benchmark-constructed sessions do not fully represent naturally occurring interactions, which may involve domain switching, user adaptation, and longer conversational dependencies. Therefore, these results should not be interpreted as direct evidence of real-world context accumulation. Future work should evaluate the framework on real interaction traces and deployment-representative OPs.

\del{The OP is treated as a static distribution over task types,
specifying relative frequencies but not the temporal structure
of task arrivals. In the current formulation, task types are
drawn independently at each step and independently of the
interaction state, separating task arrival from performance
dynamics and providing a tractable baseline for isolating
sequential dependence in model behavior.}
The OP specifies task frequencies, not task ordering. This allows the model to focus on dependence arising from the interaction dynamics rather than task arrivals.
In practice, however, task sequences are often structured.
Users may adapt their behavior in response to model
performance, creating a feedback loop between interaction
state and task selection that is not captured here.\del{To illustrate, consider a coding assistant session in which
70\% of tasks are code generation and 30\% are debugging.
The static OP correctly captures these marginal frequencies,
but treats tasks as independent draws. In reality, tasks may
follow a structured pattern (e.g., generate $\to$ debug),
and users may change behavior after failures.}
\del{This has implications for reliability. If debugging tasks are
harder and failures degrade subsequent performance through
accumulated context, then reliability under such structured
sequences may be lower than predicted by the static OP, which
captures only marginal frequencies and not temporal patterns.}For example, a coding assistant session may consist of 70\% code-generation tasks and 30\% debugging tasks. The OP captures these frequencies but not the ordering of tasks, such as generate $\rightarrow$ debug sequences. This can affect reliability. If debugging tasks are harder and performance depends on previous interactions, reliability under such sequences may be lower than predicted by a static OP, which ignores task ordering.
\del{In addition, the use of a population-level OP implies that reported reliability reflects average performance across heterogeneous users and interaction patterns, potentially masking significant variation at the individual user level. A distinction between population-level and instance-specific OPs has been explored in autonomous vehicle safety, where fleet-level averages were shown to obscure safety-critical differences across individual instances~\cite{popov_dynamic_2025}. An analogous concern applies here: users with distinct task distributions may experience reliability that differs considerably from the population-level estimate.} In addition, a population-level OP implies that reliability reflects average performance across heterogeneous users and interaction patterns, potentially masking substantial variation at the individual-user level. A similar issue has been identified in autonomous vehicle safety, where fleet-level averages were shown to obscure safety-critical differences between individual instances~\cite{popov_dynamic_2025}. Analogously, users with different task distributions may experience reliability that differs substantially from the population-level estimate.
\del{Extending the framework to instance-specific OPs would enable more personalized and deployment-relevant reliability assessment.} Instance-specific OPs could enable more personalized and deployment-relevant reliability assessment.

\del{Future work will extend the framework to capture more complex interaction dynamics, including cross-domain dependence via a shared global latent state, time-varying transitions, and adaptive task arrival processes. Evaluating the proposed approach across a broader range of LLMs is also important, as sequential dependence may vary with model architecture and context handling. Evaluating the proposed approach on task sequences collected from real user interactions, rather than artificially constructed orderings, would provide a more grounded assessment of how sequential dependence manifests in practice.} 

Future work will investigate deployment-derived and instance-specific OPs, real user interaction traces, cross-domain dependence, time-varying transition dynamics, adaptive task-arrival processes, and a broader range of LLMs.

\del{Overall, these results suggest that sequential dependence is a practically relevant and measurable property of LLM reliability, and that explicitly modeling interaction dynamics is essential for deployment-relevant evaluation.}

\section*{Acknowledgment}
This work receives funding from the European Union's EU Framework Program for Research and Innovation Europe Horizon (grant agreement No 101202457). 
Views expressed are those of the authors only and do not necessarily reflect those of the European Union or European Research Executive Agency (REA). Neither the European Union nor the granting authority can be held responsible for them.
SK's contribution is supported by the UKRI Future Leaders Fellowship Grant [MR/S035176/1].

\bibliography{ref}  

@article{kaufmann_regression_1987,
  title={Regression models for nonstationary categorical time series: asymptotic estimation theory},
  author={Kaufmann, Heinz},
  journal={The Annals of Statistics},
  pages={79--98},
  year={1987},
  publisher={JSTOR}
}

@article{barry_autoreg_1993,
  title={A Bayesian analysis for change point problems},
  author={Barry, Daniel and Hartigan, John A},
  journal={Journal of the American Statistical Association},
  volume={88},
  number={421},
  pages={309--319},
  year={1993},
  publisher={Taylor \& Francis}
}

@inproceedings{liu_uncertainty_2025,
  title={Uncertainty quantification and confidence calibration in large language models: A survey},
  author={Liu, Xiaoou and Chen, Tiejin and Da, Longchao and Chen, Chacha and Lin, Zhen and Wei, Hua},
  booktitle={Proceedings of the 31st ACM SIGKDD Conference on Knowledge Discovery and Data Mining V. 2},
  pages={6107--6117},
  year={2025}
}

@inproceedings{geng_survey_2024,
  title={A survey of confidence estimation and calibration in large language models},
  author={Geng, Jiahui and Cai, Fengyu and Wang, Yuxia and Koeppl, Heinz and Nakov, Preslav and Gurevych, Iryna},
  booktitle={Proceedings of the 2024 Conference of the North American Chapter of the Association for Computational Linguistics: Human Language Technologies (Volume 1: Long Papers)},
  pages={6577--6595},
  year={2024}
}

@article{dai_pre_2025,
  title={Pre-trained Large Language Models Learn Hidden Markov Models In-context},
  author={Dai, Yijia and Gao, Zhaolin and Sattar, Yahya and Dean, Sarah and Sun, Jennifer J},
  journal={arXiv preprint arXiv:2506.07298},
  year={2025}
}

@article{popov_dynamic_2025,
  title={Dynamic safety assessment of autonomous vehicle based on multivariate Bayesian inference (DyAVSA)},
  author={Popov, Peter},
  journal={Journal of Reliable Intelligent Environments},
  volume={11},
  number={3},
  pages={14},
  year={2025},
  publisher={Springer}
}

@inproceedings{bishop_pods_1988,
  title={PODS revisited-a study of software failure behaviour},
  author={Bishop, Peter G and Pullen, FD},
  booktitle={[1988] The Eighteenth International Symposium on Fault-Tolerant Computing. Digest of Papers},
  pages={2--8},
  year={1988},
  organization={IEEE}
}

@book{cappe_inference_2005,
  title={Inference in hidden Markov models},
  author={Capp{\'e}, Olivier and Moulines, Eric and Ryd{\'e}n, Tobias},
  year={2005},
  publisher={Springer}
}

@article{lim_evaluating_2025,
  title={Evaluating motivational interview quality using large language models and hidden Markov models},
  author={Lim, Kyungho and Jung, Young-Chul and Kim, Byung-Hoon},
  journal={BMC psychiatry},
  volume={25},
  number={1},
  pages={908},
  year={2025},
  publisher={Springer}
}

@article{ildiz_self_2024,
  title={From self-attention to markov models: Unveiling the dynamics of generative transformers},
  author={Ildiz, M Emrullah and Huang, Yixiao and Li, Yingcong and Rawat, Ankit Singh and Oymak, Samet},
  journal={arXiv preprint arXiv:2402.13512},
  year={2024}
}

@article{zekri_large_2024,
  title={Large language models as markov chains},
  author={Zekri, Oussama and Odonnat, Ambroise and Benechehab, Abdelhakim and Bleistein, Linus and Boull{\'e}, Nicolas and Redko, Ievgen},
  journal={arXiv preprint arXiv:2410.02724},
  year={2024}
}

@article{rabiner_tutorial_2002,
  title={A tutorial on hidden Markov models and selected applications in speech recognition},
  author={Rabiner, Lawrence R},
  journal={Proceedings of the IEEE},
  volume={77},
  number={2},
  pages={257--286},
  year={2002},
  publisher={Ieee}
}

@article{liu_resilience_2025,
  title={Resilience evaluation of multi-feature system based on hidden Markov model},
  author={Liu, Jiaying and Zhang, Jun and Tian, Qingfeng and Wu, Bei},
  journal={Reliability Engineering \& System Safety},
  volume={253},
  pages={110561},
  year={2025},
  publisher={Elsevier}
}

@article{gamiz_hidden_2023,
  title={Hidden Markov models in reliability and maintenance},
  author={G{\'a}miz, Mar{\'\i}a Luz and Limnios, Nikolaos and del Carmen Segovia-Garc{\'\i}a, Mar{\'\i}a},
  journal={European Journal of Operational Research},
  volume={304},
  number={3},
  pages={1242--1255},
  year={2023},
  publisher={Elsevier}
}

@article{gneiting_strictly_2007,
  title={Strictly proper scoring rules, prediction, and estimation},
  author={Gneiting, Tilmann and Raftery, Adrian E},
  journal={Journal of the American statistical Association},
  volume={102},
  number={477},
  pages={359--378},
  year={2007},
  publisher={Taylor \& Francis}
}

@article{aghazadeh_hierarchical_2025,
  title={A Hierarchical Imprecise Probability Approach to Reliability Assessment of Large Language Models},
  author={Aghazadeh-Chakherlou, Robab and Guo, Qing and Khastgir, Siddartha and Popov, Peter and Zhang, Xiaoge and Zhao, Xingyu},
  journal={Reliabilty Engineering and System Safety},
  year={2026}
}

@inproceedings{bishop_variation_1993,
  title={The variation of software survival time for different operational input profiles (or why you can wait a long time for a big bug to fail)},
  author={Bishop, Peter G},
  booktitle={FTCS-23 The Twenty-Third International Symposium on Fault-Tolerant Computing},
  pages={98--107},
  year={1993},
  organization={IEEE}
}

@article{strigini_testing_1996,
  title={On testing process control software for reliability assessment: the effects of correlation between successive failures},
  author={Strigini, Lorenzo},
  journal={Software Testing, Verification and Reliability},
  volume={6},
  number={1},
  pages={33--48},
  year={1996},
  publisher={Wiley Online Library}
}

@article{littlewood_conservative_1997,
	title = {Some conservative stopping rules for the operational testing of safety critical software},
	volume = {23},
	number = {11},
	journal = {IEEE Trans. on software Engineering},
	author = {Littlewood, Bev and Wright, David},
	year = {1997},
	pages = {673--683},
}

@article{goseva_popstojanova_failure_2000,
	title = {Failure correlation in software reliability models},
	volume = {49},
	number = {1},
	journal = {IEEE Trans. on Reliability},
	author = {Goseva-Popstojanova, K. and Trivedi, K. S.},
	year = {2000},
	pages = {37--48}
}

@article{chen_binary_1996,
	title = {A binary {Markov} process model for random testing},
	volume = {22},
	number = {3},
	journal = {IEEE Trans. on Softw. Eng.},
	author = {Chen, Sanping and Mills, Shirley},
	year = {1996},
	pages = {218--223},
}

@article{luettgau_hibayes_2025,
  title={HiBayES: A {Hierarchical Bayesian Modeling Framework for AI Evaluation Statistics}},
  author={Luettgau, Lennart and Coppock, Harry and Dubois, Magda and Summerfield, Christopher and Ududec, Cozmin},
  journal={arXiv preprint arXiv:2505.05602},
  year={2025}
}

@article{miller_adding_2024,
  title={Adding error bars to evals: A statistical approach to language model evaluations},
  author={Miller, Evan},
  journal={arXiv preprint arXiv:2411.00640},
  year={2024}
}

@article{salako_unnecessity_2023,
  title={The unnecessity of assuming statistically independent tests in bayesian software reliability assessments},
  author={Salako, Kizito and Zhao, Xingyu},
  journal={IEEE Trans. on Software Engineering},
  volume={49},
  number={4},
  pages={2829--2838},
  year={2023},
  publisher={IEEE}
}
\bibliographystyle{IEEEtran}

\appendices

\section{Outline of Proofs of Theorems~\ref{thm_postrior_for_HMM} and~\ref{thm_posterior_reliability}}
\label{sec_proof_theorem}

\begin{proof}
\del{The proof outline is organized in five steps. 
First, Steps 1--3 establish how the LLM behavior is inferred from data (Theorem~\ref{thm_postrior_for_HMM}). Then, Steps 4, 5 use this result to derive the reliability of the LLM for future tasks (Theorem~\ref{thm_posterior_reliability}).} The proof proceeds in five steps. Steps 1--3 derive the posterior
distribution of the HMM parameters (Theorem~\ref{thm_postrior_for_HMM}).
Steps 4--5 propagate this posterior to the reliability quantities
and verify the mixed-subdomain recursion
(Theorem~\ref{thm_posterior_reliability}).

\textbf{Step 1: Joint distribution factorization.}
Condition on the observed subdomain labels $j_{1:T}$. Let
$z_{1:T}\in\mathcal{Z}^T$ and $y_{1:T}\in\{0,1\}^T$. By the model
assumptions (initial distribution, Markov property, and conditional
independence of emissions given $(Z_{i,t},j_t)$), the joint
distribution factorizes as
\begin{align}
& p(Z_{i,1:T}=z_{1:T},\,Y_{i,1:T}=y_{1:T}\mid j_{1:T},\theta_i,\delta_i,A_i)
\nonumber\\
&\hspace{0.2cm}
= p(Z_{i,1}=z_1\mid \delta_i)
\prod_{t=1}^{T-1} p(Z_{i,t+1}=z_{t+1}\mid Z_{i,t}=z_t,A_i)
\nonumber\\
&\hspace{0.45cm}\times
\prod_{t=1}^{T} p(Y_{i,t}=y_t\mid Z_{i,t}=z_t,j_t,\theta_i) \nonumber
\end{align}
Using $p(Z_{i,1}=s\mid\delta_i)=(\delta_i)_s$, $p(Z_{i,t+1}=s'\mid Z_{i,t}=s,A_i)=(A_i)_{s,s'}$, and $p(Y_{i,t}=y\mid Z_{i,t}=s,j_t=j,\theta_i)=
    \theta_{ij,s}^{\,y}(1-\theta_{ij,s})^{\,1-y}$,
provides the standard HMM joint distribution with
context-dependent emissions which is the standard HMM factorization with context-dependent emissions.
\del{This step translates our verbal model description into a mathematical form. We first choose the starting hidden state,
then move between states,
and at each time step generate an outcome from the current state.}

\textbf{Step 2: Likelihood via marginalization over latent states.}
By definition, the likelihood of $(\theta_i,A_i)$ given the observed
outcomes $y_{1:T}$ and conditioned on $j_{1:T}$ is
\[
\mathcal{L}_i(\theta_i,A_i)
=
p(Y_{i,1:T}=y_{1:T}\mid j_{1:T},\theta_i,\delta_i,A_i).
\]
Using the law of total
probability:
\begin{align}
&\mathcal{L}_i(\theta_i,A_i)
= \nonumber \\
& \sum_{z_{1:T}\in\mathcal{Z}^T}
p(Z_{i,1:T}=z_{1:T},\,Y_{i,1:T}=y_{1:T}
\mid j_{1:T},\theta_i,\delta_i,A_i) \nonumber
\label{eq_marginal_proof}
\end{align}
\del{In this step we want the likelihood of the observed data only.
But the hidden states $Z_{i,1:T}$ are not observed. Thus, we sum over all possible hidden state sequences. That is, we do not know which hidden state path actually happened, so we consider every possible hidden state path,
compute the probability of the data under each path,
and add them all together.
That gives the probability of the observed data, regardless of which hidden path produced it.}
Since the latent state sequence is unobserved, the likelihood is obtained by marginalizing over all possible state sequences.

\textbf{Step 3: Posterior characterization (Theorem~\ref{thm_postrior_for_HMM}).}
\del{Let $\psi_i = \{\theta_{ij,s}, A_i, \mu_i, \nu_i\}$ denote the model parameters, where $(\mu_i,\nu_i)$ are domain-level hyperparameters governing the hierarchical prior on $\theta_{ij,s}$. Given the observed data $(j_{1:T}, Y_{i,1:T})$, and the prior factorization} Applying Bayes' theorem with the likelihood from Step 2
and the hierarchical prior factorization yields
\[
p(\psi_i)=p(\theta_i\mid\mu_i,\nu_i)\,p(\mu_i,\nu_i)\,p(A_i),
\]
Using the Bayes' rule:
\begin{align}
& p(\psi_i \mid Y_{i,1:T}, j_{1:T})
= \nonumber \\
& \hspace{1cm}\frac{
\mathcal{L}_i(\theta_i,A_i)\,
p(\theta_i \mid \mu_i,\nu_i)\,
p(\mu_i,\nu_i)\,
p(A_i)
}{
\int
\mathcal{L}_i(\theta_i,A_i)\,
p(\theta_i \mid \mu_i,\nu_i)\,
p(\mu_i,\nu_i)\,
p(A_i)
\, d\psi_i
} \nonumber
\end{align}

\noindent\textbf{Step 4: Posterior distributions of reliability
(Theorem~\ref{thm_posterior_reliability}).}
The reliability quantities in
Theorem~\ref{thm_posterior_reliability}
are measurable functions of the model parameters.
The subdomain reliability $R_{ij}(n;\psi_i)$ is obtained
by averaging the state-conditional reliability
$R_{ij,s}(n;\psi_i)$ over the predictive state distribution
$\rho_i(s;\psi_i)$. Here $\rho_i(s;\psi_i)$ denotes the distribution of the latent state governing
the first future task: for a continuing session,
\[
\rho_i(s';\psi_i)
= \sum_{s=1}^K p(Z_{i,T}=s \mid Y_{i,1:T}, j_{1:T}, \psi_i)\,(A_i)_{s,s'},
\]
obtained by propagating the filtered state distribution at the final observed
step through one transition; for a fresh session, $\rho_i(s;\psi_i)=\delta_{i,s}$.
The domain-level reliability $R_i(n;\psi_i)$ is obtained
by applying the $n$-step mixed-subdomain recursion to the
state-conditional average success probability
$\bar{\theta}_{i,s} = \sum_{j=1}^{n_i}\Omega_{ij}\theta_{ij,s}$
and weighting by 
$\rho_i(s';\psi_i)$:
\[
R_i(n;\psi_i) = \sum_{s=1}^{K}\rho_i(s;\psi_i)\,
\bar{R}_{i,s}(n;\psi_i).
\]
The overall LLM reliability $R_L(n;\psi)$ aggregates
domain-level reliabilities using weights $W_i$:
\[
R_L(n;\psi) = \sum_{i=1}^{m} W_i\, R_i(n;\psi_i).
\]

\del{It is important to note that, although each quantity
$R_{ij}(n;\psi_i)$, $R_i(n;\psi_i)$, and $R_L(n;\psi)$
represents a probability of success for future tasks,
these quantities depend on the model parameters $\psi_i$,
which are themselves uncertain. Under the Bayesian
framework, $\psi_i$ is treated as a random variable with
posterior distribution $p(\psi_i \mid Y_{i,1:T}, j_{1:T})$.
Consequently, the reliability quantities are also random
variables, since they are obtained by evaluating
deterministic functions at the random parameter vector.
Therefore, it is meaningful to consider probability
statements about these reliability values, such as the
probability that the reliability lies within a given range.}

\del{Since Theorem~\ref{thm_postrior_for_HMM} establishes that
$p(\psi_i \mid Y_{i,1:T}, j_{1:T})$ is well defined, the
posterior distributions of these reliability quantities
are the pushforward measures induced by this posterior.
Hence, for any measurable set $B\subseteq[0,1]$,}

Since Theorem~\ref{thm_postrior_for_HMM} establishes the posterior
distribution
$p(\psi_i \mid Y_{i,1:T}, j_{1:T})$, and the reliability
quantities are measurable functions of $\psi_i$, their posterior
are the corresponding pushforward measures. Hence, for
any measurable set $B\subseteq[0,1]$,
\begin{align}
   & p \big(R_{ij}(n;\psi_i)\in B
     \mid Y_{i,1:T}, j_{1:T}\big) = \nonumber \\
   & \hspace{1cm}
     \int \mathbf{1}\{R_{ij}(n;\psi_i)\in B\}\,
     p(\psi_i \mid Y_{i,1:T}, j_{1:T})\,d\psi_i,
     \nonumber\\
   & p \big(R_i(n;\psi_i)\in B
     \mid Y_{i,1:T}, j_{1:T}\big) = \nonumber\\
   & \hspace{1cm}
     \int \mathbf{1}\{R_i(n;\psi_i)\in B\}\,
     p(\psi_i \mid Y_{i,1:T}, j_{1:T})\,d\psi_i,
     \nonumber\\
   & p \big(R_L(n;\psi)\in B
     \mid \{Y_{i,1:T},j_{1:T}\}_{i=1}^m\big) = \nonumber \\
   & \hspace{1cm}
     \int \mathbf{1}\{R_L(n;\psi)\in B\}\,
     p(\psi \mid \{Y_{i,1:T},j_{1:T}\}_{i=1}^m)\,d\psi
     \nonumber
\end{align}
where $\mathbf{1}\{\cdot\}$ denotes the indicator function.
Thus, the reliability quantities admit well-defined
posterior distributions.

\textbf{Step 5: Verification of the recursive representation
for $\bar{R}_{i,s}(n;\psi_i)$.}
 
\del{Here $x_\tau$ denotes the $\tau$-th future task and
$I(x_\tau)\in\{0,1\}$ its success indicator.} denote the $\tau-th$ future task and $I(x_\tau)$ its success indicator. We verify by
induction that
\begin{equation}
  \bar{R}_{i,s}(n;\psi_i)
  =
  p\!\left(
    \bigcap_{\tau=1}^{n}\{I(x_\tau)=1\}
    \;\Big|\;
    Z_1 = s
  \right),
  \label{eq_step5_claim}
\end{equation}
\del{where the future tasks $x_1,\dots,x_n$ are drawn from domain
$D_i$ with subdomain labels $j_\tau \sim \Omega_{i\cdot}$
drawn independently at each step, independently of the latent
state $Z_\tau$.} where future subdomain labels are drawn independently from $\Omega_i$· and independently of the latent state process.
 
\emph{Base case ($n=1$).}  Since $j_1$ is drawn from
$\Omega_{i\cdot}$ independently of $Z_1$,
\begin{align}
    & p(I(x_1)=1 \mid Z_1=s)
  = \nonumber \\
  & \sum_{j=1}^{n_i}
  \Omega_{ij}\,
  p(I(x_1)=1 \mid Z_1=s,\, j_1=j)
  = \nonumber \\
  & \sum_{j=1}^{n_i} \Omega_{ij}\,\theta_{ij,s}
  =
  \bar{\theta}_{i,s}
  =
  \bar{R}_{i,s}(1;\psi_i) \nonumber
\end{align}

\emph{Inductive step.} \del{Assume \eqref{eq_step5_claim} holds
for $n-1$. Under the mixed-subdomain sampling scheme described
above, and using the law of total probability, the Markov
property of $\{Z_\tau\}$, the conditional independence of the
current outcome $I(x_1)$ from future latent states and future outcomes given $(Z_1,j_1)$, and the independence of $j_1$
from $Z_1$, we obtain}
Assume \eqref{eq_step5_claim} holds for $n-1$. Using the mixed-subdomain sampling scheme, the law of total probability, the Markov property of $\{Z_\tau\}$, the conditional independence of $I(x_1)$ from future states and outcomes given $(Z_1,j_1)$, and the independence of $j_1$ and $Z_1$, we obtain
\begin{align}
  &p\!\left(
    \bigcap_{\tau=1}^{n}\{I(x_\tau)=1\}
    \;\Big|\; Z_1=s
  \right)
  = \nonumber \\
  & \sum_{j=1}^{n_i}
  \Omega_{ij}\,
  p(I(x_1)=1 \mid Z_1=s,\, j_1=j) \times
  \notag\\
  &
  \sum_{s'=1}^{K}
  p(Z_2=s' \mid Z_1=s)
  \cdot
  p\!\left(
    \bigcap_{\tau=2}^{n}\{I(x_\tau)=1\}
    \;\Big|\; Z_2=s'
  \right)
  \notag\\
  &=
  \sum_{j=1}^{n_i} \Omega_{ij}\,\theta_{ij,s}
  \;\cdot\;
  \sum_{s'=1}^{K}
  (A_i)_{s,s'}\,
  \bar{R}_{i,s'}(n-1;\psi_i)
  \notag \\
  &=
  \bar{\theta}_{i,s}
  \sum_{s'=1}^{K}
  (A_i)_{s,s'}\,
  \bar{R}_{i,s'}(n-1;\psi_i)=\bar{R}_{i,s}(n;\psi_i), \nonumber
\end{align}

where the third line applies the induction hypothesis to each
$\bar{R}_{i,s'}(n-1;\psi_i)$ and collects
$\sum_j \Omega_{ij}\,\theta_{ij,s} = \bar{\theta}_{i,s}$.
 
\del{This completes the induction.  The measurability argument
(that $\bar{R}_{i,s}(n;\psi_i)$ is a measurable function of
$\psi_i$, and hence admits a posterior distribution) follows
by the same argument as for $R_{ij,s}(n;\psi_i)$ in the
previous version, since $\bar{\theta}_{i,s}$ is a finite
linear combination of the measurable functions $\theta_{ij,s}$.}
This establishes the recursion. Measurability follows because
$\bar{\theta}_{i,s}$ is a finite linear combination of the measurable
parameters $\theta_{ij,s}$.
\del{The conditional subdomain recursion $R_{ij,s}(n;\psi_i)$
verified in the earlier proof remains valid as a diagnostic
quantity: it equals
$p \bigl(\bigcap_{\tau=1}^{n}\{I(x_\tau)=1\} \mid Z_1=s,\,
x_1,\dots,x_n\in S_{ij}\bigr)$,
i.e., the probability of $n$ consecutive successes conditional
on all future tasks belonging to subdomain $S_{ij}$.}
The corresponding subdomain recursion $R_{ij,s}(n;\psi_i)$
follows analogously and retains its interpretation as the
probability of n consecutive successes conditional on all
future tasks belonging to $S_{ij}$.

Finally, the posterior in Theorem~\ref{thm_postrior_for_HMM} is approximated via Gibbs sampling and forward–backward procedures, and reliability posteriors are obtained by evaluating these functionals at the posterior draws. A complete proof is available in GitHub repository: \url{https://github.com/llmReliability/hmm-llm-reliability-proof}.

\end{proof}

\end{document}